\documentclass{article}

\usepackage{xcolor}      
\usepackage[utf8]{inputenc} 
\usepackage[T1]{fontenc}  
\usepackage[hidelinks]{hyperref}    
\usepackage{url}      
\usepackage{booktabs}    
\usepackage{amsfonts}    
\usepackage{nicefrac}    
\usepackage{microtype}   
\usepackage{amsmath, amssymb, amsthm}
\usepackage{geometry}
\usepackage{natbib}
\usepackage{algorithm}
\usepackage{algorithmic}
\usepackage{graphicx}
\usepackage{multirow}    
\usepackage{makecell}
\usepackage{tikz}
\usetikzlibrary{arrows.meta}

\usepackage{math_commands}
\usepackage{header}
\usepackage{todonotes}

\newcommand{\Psim}{P_{\text{sim}}}
\newcommand{\PL}{P_L}
\newcommand{\pieval}{\pi_{e}}

\newcommand{\zsim}{\hat{z}}

\title{\textbf{Controllable User Simulation}}

\author{Guy Tennenholtz$^\dagger$\thanks{Correspondence to: \texttt{guytenn@gmail.com}, work done at Google.},~~Ofer Meshi$^\dagger$,~~Amir Globerson$^{\dagger\ddagger}$ \\ Uri Shalit$^{\dagger\ddagger}$,~~Jihwan Jeong$^\ddagger$~~Craig Boutilier$^\dagger$
\\~\\
\large $\dagger$ Google Research, $\ddagger$ Tel Aviv University}

\begin{document}

\date{}
\maketitle

\begin{abstract}
Using offline datasets to evaluate conversational agents often fails to cover rare scenarios or to support testing new policies. This has motivated the use of \emph{controllable user simulators} for targeted, counterfactual evaluation, typically implemented by prompting or fine-tuning large language models. In this work, we formalize controllable simulation as a causal inference problem. By bridging natural language evaluation with off-policy evaluation methodology, we show that the standard practice of training simulators via supervised fine-tuning on post-hoc trajectory labels yields a structurally biased model. Specifically, these labels are inextricably coupled to the data-generating behavior policy, injecting a \emph{look-ahead bias} that breaks causal consistency. Furthermore, we prove that under policy shift this failure causes the variance of evaluation metrics to explode geometrically, a phenomenon we term \emph{controllability collapse}. To restore causal consistency, we establish theoretical conditions for accurate simulation and propose practical training mitigations: \textit{a priori} controls, step-wise dynamic controls, and direct policy-conditioned learning. Empirical evaluation confirms that while standard global controls distort conversational distributions and collapse behavioral diversity, our causally grounded simulators eliminate look-ahead bias, preserve natural variance, and exhibit robust zero-shot generalization to unseen agent behaviors.
\end{abstract}

\section{Introduction}
\label{section: introduction}

Conversational agents must be continuously evaluated for attributes such as quality, safety, and fairness. Given the complexity of human-agent interactions, these experiments are often conducted under controlled conditions to test specific use cases, such as risky behaviors, known failure modes, specific personas, or domain verticals (e.g., travel planning, apparel recommendations). While agents should ideally be evaluated online with live users, such experiments are prohibitively expensive and pose significant safety and reputational risks by exposing real users to unvalidated policies. Thus, there is a critical need for complementary evaluation methods that are scalable and low-risk, yet maintain validity with respect to real human behavior. To address this, the field has increasingly turned to user simulators \citep{davidson2023user,wang-etal-2023-rethinking,laban2026lost,naous2026flipping}, which enable systematic exploration of an agent's adaptability to diverse user behaviors and constraints without invasive live trials \citep{hsu2024minimizing,zhang2024reformulating,qin2025compass}.

To perform targeted assessments, the evaluators of agents often require \emph{controllable user simulators}, in which a \emph{control variable} is used to steer the simulator toward specific personas, constraints, or outcomes. For example, one might want to simulate a user trying to book a flight who forgets their passport number, or test an agent's de-escalation skills against an increasingly frustrated user. We formally define the goal of controllable simulation as successfully generating user responses from the \emph{true conditional distribution} of user behavior given this control variable.
Furthermore, a successful user simulator must maintain this distribution even when it interacts with a \emph{new} agent policy.

Currently, controllable simulators are typically LLMs prompted or trained on real interactions. A common practice is \emph{post-hoc trajectory-conditioned training} \citep{wang2025know,dou2025simulatorarena,liu2026muse,jin2025twice}: first, researchers collect user logs with a deployed agent. Second, an automated annotator analyzes the \emph{complete} trajectory to extract a global label (e.g., ``User successfully booked a flight but forgot their passport''). Finally, a simulator is trained via supervised fine-tuning (SFT) to predict turn-by-turn utterances conditioned on this static label. At inference, evaluators prompt the simulator with the control variable to test new agents. Despite its intuitive appeal, we show this standard practice compromises the goal of sampling from the true conditional distribution. The core issue is that a post-hoc control label is inextricably coupled to the actions of the \emph{specific agent} used to generate the training data. 

Consider a control variable for a ``frustrated user booking a flight.'' In offline logs, user frustration might stem from the training agent providing unhelpful answers. Here, the simulator may learn to associate the control ``frustrated'' with angry utterances. Suppose we now evaluate a \emph{new, highly optimized agent} providing perfect answers. Prompting the simulator as a ``frustrated user'' generates frustrated responses \emph{despite} flawless agent actions, because the conditioning label implicitly encodes the data-generating agent's (poor) future choices. Conditioning on future-informed outcomes breaks the natural conversation sequence, introducing severe \emph{look-ahead bias}.

Building on results from the off-policy evaluation (OPE) literature, we further prove that when this type of simulator evaluates an agent policy that differs from the one in the training data, this mismatch causes the variance of counterfactual evaluation metrics to explode geometrically, a structural failure we dub \emph{controllability collapse}. This collapse is a mathematical artifact of the conditioning mechanism itself: even a ``perfect'' LLM text generator suffers from this breakdown simply because the new evaluation policy diverges from the training policy.

To mitigate these problems and achieve causally sound controllable simulation, we must reframe the control mechanism to respect the natural sequence of interactions. We provide theoretical conditions under which controllable simulation remains accurate, and propose three practical train-time mitigations: (1)~\emph{\textit{A~priori} controls:} restricting the conditioning variables strictly to pre-interaction traits that are completely independent of the agent's future actions; (2) \emph{Step-wise dynamic controls:} generating a dynamic control state in a turn-by-turn fashion based \emph{only} on the observable history up to that point; and (3) \emph{Direct policy-conditioned learning:} resolving the policy mismatch by explicitly conditioning the generative simulator on the target agent's specific policy.

We provide empirical evaluations on two multi-turn dialogue datasets to validate our theoretical results. We demonstrate that standard trajectory-conditioned controls severely distort natural conversational distributions and collapse behavioral diversity. By contrast, our causally-grounded mitigations eliminate look-ahead bias, preserve natural variance, and exhibit robust zero-shot generalization to unseen agent behaviors, providing a rigorous foundation for reliable model-based evaluation.

\section{Problem Formulation}
\label{section: problem formulation}

We model the interaction of a user with an agent as a discrete-time stochastic process. Let $\mathcal{U}$ and $\mathcal{A}$ denote the user and agent action spaces; user actions might be queries, critiques or other conversational utterances, while agent actions include responses the agent may offer. A length $T$ \emph{trajectory} is $\tau = (u_1, a_1, \dots, u_T, a_T)$. Following standard formalisms in sequential decision-making and stochastic processes \citep{kallenberg1997foundations,puterman2014markov}, we define the natural filtration $\mathcal{F}_t = \sigma(u_1, a_1, \dots, u_t, a_t)$ as the observable history up to step~$t$, with realizations $h_t \in \mathcal{H}_t$. We also define $\mathcal{F}_{t-} = \mathcal{F}_{t-1} \vee \sigma(u_t)$ as the state occurring immediately before the agent acts. 

We assume the agent acts according to a \emph{policy} $\pi$, where $a_t \sim \pi(a_{t} \mid h_{t-1}, u_{t})$, which depends only on $h_{t-1}$ and $u_t$. The true user dynamics, reflecting the behavior of random users drawn from some population, is $P(u_t \mid h_{t-1})$, and depends only on the history (hence, they are invariant to the agent's future actions). Let $P^\pi(\cdot)$ be the probability measure over trajectory space induced by $\pi$, which admits the standard causal decomposition:
$
P^\pi(\tau)
=
\prod_{t=1}^{T} P(u_{t}\mid h_{t-1})\pi(a_{t}\mid h_{t-1},u_{t}).
$

A \emph{controllable simulator} conditions the generation of user behavior on some control variable $z \in \gZ$. This control variable is used to steer the simulated behavior toward specific user intents, \emph{a priori} traits or demographics, or specific conversational outcomes. Assuming a joint distribution $P(\tau, z)$ including the control of interest, a controllable simulator induces transition kernels $\Psim(u_t \mid h_{t-1}, z)$ that approximate the true conditional measure $P(u_t \mid h_{t-1}, z)$. In practice, the target agent is often evaluated as a black box, so we deploy the simulator by composing its user transitions with the target agent's policy $\pieval$. When simulating a targeted counterfactual trajectory with a novel evaluation policy $\pieval$ given control $z$, the induced distribution is:
\begin{align}
\label{eq: general controllable simulation}
  \Psim^{\pieval}(\tau \mid z)
  =
  \prod_{t=1}^T \Psim(u_{t}\mid h_{t-1}, z)\pieval(a_{t} \mid h_{t-1}, u_t).
\end{align}

The fundamental goal of controllable simulation is to successfully sample from the true conditional user distribution. We can evaluate the overall trajectory fidelity by analyzing the trajectory density ratio $W_T(z) = \frac{P^{\pieval}(\tau \mid z)}{\Psim^{\pieval}(\tau \mid z)}$. Because the trajectory distribution depends on both the user's generative model and the agent's policy responses, $W_T(z)$ captures deviations in both actors. Controlled simulation succeeds globally for an evaluation policy $\pieval$ if $W_T(z) = 1$ for all valid trajectories. 

To isolate the deviation caused specifically by the user simulator's generation at a given step, we define the step-wise \emph{user generative error} as $\rho_t(u_t \mid h_{t-1}, z) = \frac{P^{\pieval}(u_t \mid h_{t-1}, z)}{\Psim(u_t \mid h_{t-1}, z)}$. Assuming absolute continuity ($P_{\text{sim}}(u_t \mid h_{t-1}, z) > 0$ whenever $P^{\pieval}(u_t \mid h_{t-1}, z) > 0$), we can factor the trajectory density ratio into two distinct error sources:

\begin{align*}
W_T(z) 
= 
\prod_{t=1}^T \underbrace{\rho_t(u_t \mid h_{t-1}, z)}_{\text{User Generative Error}} \underbrace{\left( \frac{P^{\pieval}(a_t \mid h_{t-1}, u_t, z)}{\pieval(a_t \mid h_{t-1}, u_t)} \right)}_{\text{Agent Policy Divergence}}
\end{align*}
If the target agent $\pieval$ ignores $z$ \emph{and} $z$ is a non-descendant of $a_t$ (e.g., an \emph{a priori} trait, \Cref{fig: causal graph}), then by conditional independence $P^{\pieval}(a_t \mid h_{t-1}, u_t, z) = \pieval(a_t \mid h_{t-1}, u_t)$, canceling the \emph{agent policy divergence}. Conversely, if $z$ is a future-dependent post-hoc outcome, conditioning opens a backward causal path via collider bias, breaking statistical independence even if the agent never explicitly observes $z$.\footnote{Dropping control $z$ entirely avoids this bias, but defeats the purpose of controllable evaluation.}

\begin{figure}[t!]
    \centering
    \includegraphics[width=0.8\linewidth]{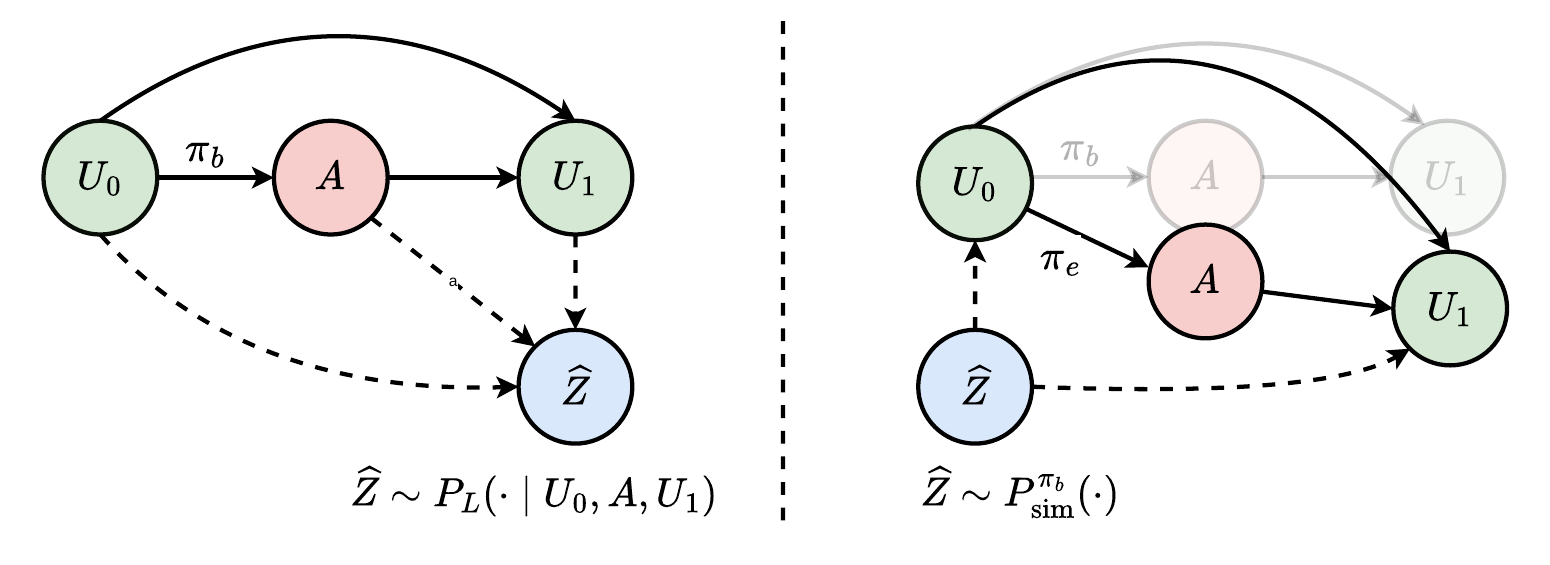}
    \caption{\footnotesize Causal graphs illustrating the look-ahead bias introduced by trajectory-conditioned SFT. \textbf{Left:} Under behavior policy $\pi_b$, control label $\widehat{Z}$ is extracted post-hoc from the trajectory. \textbf{Right:} During evaluation with policy $\pi_e$, conditioning the simulator on $\widehat{Z}$ opens a backward causal path from future outcomes to the current state, breaking the natural filtration.}
    \label{fig: causal graph}
\end{figure}

\section{Post-Hoc Trajectory-Conditioned Training and Its Pitfalls}
\label{section:pitfalls of controllable simulation}

A key question is how to train a simulator from empirical data. Recent work models user simulation as dialogue refactoring, explicitly extracting a static label (e.g., a user profile) from a full historical dialogue $\tau$ to condition turn-by-turn generation \citep{wang2025know,dou2025simulatorarena,liu2026muse,jin2025twice}. However, this approach induces a simulator that is causally biased by design, a flaw that compounds into a complete statistical breakdown when deployed to evaluate novel agent policies.

We formally define \emph{trajectory-conditioned training} as training a user simulator with offline data generated by users engaging with a specific data-gathering agent policy $\pi_b$. Note that this agent policy $\pi_b$ induces a joint distribution $P^{\pi_b}$ over the entire trajectory, including the agent actions, user utterances, and any post-hoc labels. The simulator maximizes the likelihood of user utterances conditioned on a label derived from the full trajectory. Let $\PL(\cdot \mid \tau)$ be a stochastic \emph{post-hoc labeling function} mapping a full trajectory to a distribution over $\gZ$, with $\zsim \sim \PL(\cdot \mid \tau)$ as the sampled control (note that $\PL$ and $\zsim$ are strictly $\mathcal{F}_T$-measurable). This training paradigm constructs learned dynamics converging to $\Psim(u_t \mid h_{t-1}, \zsim) \triangleq P^{\pi_b}(u_t \mid h_{t-1}, \zsim)$. Crucially, this implicitly anchors the simulator to a \emph{learned prior} $\Psim^{\pi_b}(\zsim) = \int \PL(\zsim \mid \tau) dP^{\pi_b}(\tau)$, inextricably coupling the control variable's distribution to the specific actions of the behavior policy $\pi_b$.

\paragraph{Example: Post-Hoc Labeling in Practice.}
Suppose a simulator is trained on support logs where an annotator assigned static traits like $\zsim = \text{``easily frustrated''}$. Crucially, if the behavior policy $\pi_b$ provided unhelpful answers, it actively \emph{caused} this frustration. Thus, the label is not an \emph{intrinsic} user trait, but an artifact of the specific agent's actions.

\paragraph{Policy Dependence of the Control Variable.}
Because a trajectory $\tau$ is inextricably tied to the behavior policy $\pi_b$, unlike a static, $\mathcal{F}_0$-measurable prior, $\zsim$ cannot be disentangled from the agent's actions. Assessing our structural divergence metric $W_T(z)$ exposes how this entanglement breaks the evaluation on two distinct fronts. First, \emph{regardless of the evaluation policy}, conditioning on a post-hoc label creates a backward causal path from future to past actions (see \Cref{fig: causal graph}), injecting a strict look-ahead bias (\Cref{subsection: lookahead bias}). Second, when a new evaluation policy ($\pieval \neq \pi_b$) is used, the step-wise user density ratio $\rho_t$ forces an exponential variance explosion under covariate shift (\Cref{subsection: controllability collapse}).

\subsection{Look-Ahead Bias in Trajectory-Conditioned Training}
\label{subsection: lookahead bias}

To understand why this training paradigm breaks down, we first examine how a control label $\zsim$ is generated. In post-hoc labeling, the label is assigned after observing the entire conversation, creating a dependency where the label
implicitly encodes the agent's choices.

\begin{definition}[Action-dependent Labeling]
A labeling function $\PL(\cdot \mid \tau)$ is \emph{action-dependent} if for some step $t$ and realizations $h_{t-1}, u_t, a_t$, we have $P^{\pi_b}(\zsim \mid h_{t-1}, u_t, a_t) \neq P^{\pi_b}(\zsim \mid h_{t-1}, u_t)$.
\end{definition}

The above implies $\zsim \not\perp \!\!\!\perp a_t \mid \mathcal{F}_{t-}$. If the agent's actions influence this trajectory, conditioning the simulator's \emph{past} generations on this \emph{future-informed} label leaks information about the future. This violation of the natural filtration guarantees that the simulated distribution drifts from reality. 

\begin{theorem}[Divergence of Trajectory-Conditional Simulators]
\label{thm:lookahead}
Let $\PL$ be an action-dependent labeling function. For any policy $\pi$, the simulated trajectory density deviates from the true density by the following compounding factor:
\begin{align}\label{eq_thrm1}
\frac{\Psim^\pi(\tau)}{P^\pi(\tau)}
=
\mathbb{E}_{\zsim \sim \PL(\zsim\mid \tau)} \left[
\prod_{t=1}^T \frac{P^{\pi_b}(\zsim \mid h_{t-1}, u_{t})}{P^{\pi_b}(\zsim \mid h_{t-1}, u_{t}, a_{t})} \right].
\end{align}
\end{theorem}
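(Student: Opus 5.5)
The plan is to read $\Psim^\pi(\tau)$ as the marginal trajectory density of the deployed simulator: draw $\zsim$ from the learned prior $\Psim^{\pi_b}(\zsim)$, then roll out \eqref{eq: general controllable simulation} with $\Psim(u_t\mid h_{t-1},\zsim)=P^{\pi_b}(u_t\mid h_{t-1},\zsim)$ and agent $\pi$. So
\[
\Psim^\pi(\tau)=\int \Psim^{\pi_b}(\zsim)\prod_{t=1}^T P^{\pi_b}(u_t\mid h_{t-1},\zsim)\,\pi(a_t\mid h_{t-1},u_t)\,d\zsim ,
\]
while $P^\pi(\tau)=\prod_t P(u_t\mid h_{t-1})\pi(a_t\mid h_{t-1},u_t)$. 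The $\pi$ factors are identical and cancel in the ratio. This is why the result holds for \emph{any} $\pi$: the policy only enters through these factors. The remaining task is to express each learned user kernel relative to the true one $P(u_t\mid h_{t-1})$.

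The key step is Bayes' rule under $P^{\pi_b}$:
\[
P^{\pi_b}(u_t\mid h_{t-1},\zsim)=P^{\pi_b}(u_t\mid h_{t-1})\,\frac{P^{\pi_b}(\zsim\mid h_{t-1},u_t)}{P^{\pi_b}(\zsim\mid h_{t-1})}.
\]
We then use $P^{\pi_b}(u_t\mid h_{t-1})=P(u_t\mid h_{t-1})$, since the user dynamics depend only on history. Hence the per-step ratio is $P^{\pi_b}(\zsim\mid h_{t-1},u_t)/P^{\pi_b}(\zsim\mid h_{t-1})$. Since $h_{t-1}=(h_{t-2},u_{t-1},a_{t-1})$, the denominator equals $P^{\pi_b}(\zsim\mid h_{t-2},u_{t-1},a_{t-1})$. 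Reindexing the denominators therefore telescopes the product:
\[
\prod_{t=1}^T\frac{P^{\pi_b}(\zsim\mid h_{t-1},u_t)}{P^{\pi_b}(\zsim\mid h_{t-1})}
=\frac{P^{\pi_b}(\zsim\mid h_T)}{P^{\pi_b}(\zsim\mid h_0)}\prod_{t=1}^T\frac{P^{\pi_b}(\zsim\mid h_{t-1},u_t)}{P^{\pi_b}(\zsim\mid h_{t-1},u_t,a_t)}.
\]

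Finally, two boundary identifications close the argument. First, $P^{\pi_b}(\zsim\mid h_0)=\Psim^{\pi_b}(\zsim)$ is the learned prior. Second, $P^{\pi_b}(\zsim\mid h_T)=\PL(\zsim\mid\tau)$, because $\zsim$ is generated from $\tau$ alone and is $\mathcal F_T$-measurable. The prior then cancels against the mixing weight in $\Psim^\pi(\tau)$. The integral against $\PL(\zsim\mid\tau)$ becomes the expectation in \eqref{eq_thrm1}.

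I expect the main obstacle to be bookkeeping rather than any deep estimate. The steps needing care are the telescoping index shift, the two boundary identifications, and stating the support assumptions. All conditional probabilities of $\zsim$ must be positive wherever they are evaluated. This is the analogue of the absolute-continuity assumption used for $\rho_t$. Action-dependence of $\PL$ is not needed for the identity itself. It only guarantees that some factor in the product differs from one, so the ratio is genuinely non-trivial. I would remark on this after the derivation.
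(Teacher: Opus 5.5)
Your proposal is correct and follows essentially the same route as the paper's own proof: Bayes-expand the learned kernel using $P^{\pi_b}(u_t\mid h_{t-1})=P(u_t\mid h_{t-1})$, cancel the $\pi$ factors, telescope the denominators via $h_{t-1}=(h_{t-2},u_{t-1},a_{t-1})$, and identify the boundary terms with the learned prior $P^{\pi_b}(\zsim\mid h_0)$ and with $\PL(\zsim\mid\tau)=P^{\pi_b}(\zsim\mid h_T)$. You are right that action-dependence is never used, but your remark that it makes the ratio ``genuinely non-trivial'' overreaches: the last denominator always cancels against $\PL(\zsim\mid\tau)$, so for $T=1$ the right-hand side equals $\int P^{\pi_b}(\zsim\mid u_1)\,d\zsim=1$ even for an action-dependent labeler.
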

Because $\zsim$ depends on the agent's future actions, the numerator and denominator generally do not cancel out.  Critically, the evaluation policy $\pieval$ is absent on the r.h.s. of \Cref{eq_thrm1}. The bias is entirely a function of the behavior policy $\pi_b$: the simulator marginalizes over $\pi_b$'s hypothetical actions (numerator), while the true environment updates using the realized action $a_t$ (denominator). Unless the label is completely independent of all agent actions, this ratio strictly diverges from 1.

This phenomenon structurally mirrors the \emph{conditioning bias} (or hindsight bias) well-documented in return-conditioned offline reinforcement learning, where conditioning on future outcomes breaks the natural filtration of the environment \citep{paster2022you,brandfonbrener2022does,wang2024critic,dou2025simulatorarena,naous2026flipping}.

\textbf{Example: A Recommender Agent.} To illustrate this bias, consider an environment where a user makes a vague request $u_1$. The agent recommends item $a_1=0$ or $a_1=1$, and the user responds by purchasing ($y=1$) or leaving ($y=0$). Let our trajectory control be a successful purchase: $\PL(\zsim=1 \mid \tau) = \mathbb{I}[y=1]$. 
In the real world, suppose item 1 succeeds 80\% of the time ($P(y=1 \mid u_1, a_1=1)=0.8$) and item 0 succeeds 20\% ($P(y=1 \mid u_1, a_1=0)=0.2$). If our offline dataset used an agent that tried both equally ($\pi_b(a_1=1 \mid u_1)=0.5$), the simulator learns a prior success rate of 50\%: $\Psim^{\pi_b}(\zsim=1) = 0.5(0.2) + 0.5(0.8) = 0.5$.

Now suppose the simulator is used to evaluate a new, optimized agent $\pieval$ that \emph{always} makes the high-reward recommendation ($\pieval(a_1=1 \mid u_1)=1$). Under $\pieval$ the true probability of a successful trajectory is $0.8$. However, because the simulator is anchored to the offline prior, the simulated success probability remains $0.5$. The simulator is functionally blind to the improvement in $\pieval$. Per \Cref{thm:lookahead}, this divergence is captured exactly by our bias factor: $\frac{P^{\pi_b}(\zsim=1 \mid u_1)}{P^{\pi_b}(\zsim=1 \mid u_1, a_1=1)} = \frac{0.5}{0.8}=0.625$.

\subsection{Controllability Collapse in Counterfactual Simulation}
\label{subsection: controllability collapse}

So far we've seen how trajectory-conditioned control can lead to bias. Now we examine how it can lead to disastrously compounding variance under policy shifts. We isolate this flaw by examining the cumulative validity ratio of simulated user actions: $W^{(u)}_T(\zsim) \triangleq \prod_{t=1}^T \rho_t(u_t \mid h_{t-1}, \zsim)$. By expanding the step-wise user generative error $\rho_t$ via Bayes' rule, the unconditional user dynamics $P(u_t \mid h_{t-1})$ cancel out perfectly. This reveals that the error is driven \emph{solely} by the mismatch in Bayesian belief updates regarding the label $\zsim$ under the two different policies. Let $M_t^\pi(u_t) \triangleq \frac{P^\pi(\zsim \mid h_{t-1}, u_t)}{P^\pi(\zsim \mid h_{t-1})}$ denote this multiplicative belief update. The step-wise mismatch is exactly the ratio of these updates:
$\rho_t(u_t \mid h_{t-1}, \zsim) = \frac{M_t^{\pieval}(u_t)}{M_t^{\pi_b}(u_t)}.$

Because this mismatch fluctuates at each conversation turn, we define its step-wise volatility as the \emph{local label sensitivity}: $V_t(h_{t-1}, \zsim, \pieval) \triangleq \text{Var}_{u_t \sim \Psim(\cdot \mid h_{t-1}, \zsim)} ( \rho_t(u_t \mid h_{t-1}, \zsim) )$.

\begin{theorem}[Geometric Explosion of Variance]
\label{thm:variance_explosion}
The counterfactual evaluation variance equals the cumulative sum of local label sensitivities, weighted by the compounding density ratio $W^{(u)}_{t-1}(\zsim)^2$:
$$\text{Var}_{\tau \sim \Psim^{\pieval}(\cdot \mid \zsim)} \left( W^{(u)}_T(\zsim) \right) = \sum_{t=1}^T \mathbb{E}_{h_{t-1} \sim \Psim^{\pieval}(\cdot \mid \zsim)} \left[ W^{(u)}_{t-1}(\zsim)^2 V_t(h_{t-1}, \zsim, \pieval) \right]$$
If local sensitivity is bounded below ($V_t(h_{t-1}, \zsim, \pieval) \ge \eta > 0$), variance explodes geometrically, strictly lower-bounded by $(1+\eta)^T - 1$.
\end{theorem}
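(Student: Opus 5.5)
We prove the variance identity with a martingale argument for the product $W^{(u)}_t(\zsim)=\prod_{s\le t}\rho_s$ under the simulated measure $Q\triangleq\Psim^{\pieval}(\cdot\mid\zsim)$, and then lower bound the resulting recursion.

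\textbf{Step 1 (martingale property).} Let $\mathcal{G}_{t-1}$ be the sigma-algebra generated by $h_{t-1}$ and $\zsim$. By \Cref{eq: general controllable simulation}, $u_t$ is drawn from $\Psim(\cdot\mid h_{t-1},\zsim)$ under $Q$. Absolute continuity then gives
$$\mathbb{E}_Q[\rho_t\mid\mathcal{G}_{t-1}]=\sum_{u_t}P^{\pieval}(u_t\mid h_{t-1},\zsim)=1.$$
Since $W^{(u)}_{t-1}$ is $\mathcal{G}_{t-1}$-measurable, and the agent action $a_{t-1}$ does not enter the ratio $\rho_t$ except through $h_{t-1}$, we get $\mathbb{E}_Q[W^{(u)}_t\mid\mathcal{G}_{t-1}]=W^{(u)}_{t-1}$. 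So $W^{(u)}_t$ is a mean-one $Q$-martingale, with $W^{(u)}_0=1$ and $\mathbb{E}_Q[W^{(u)}_T]=1$.

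\textbf{Step 2 (telescoping the second moment).} Write $W_t=W_{t-1}\rho_t$. Conditioning on $\mathcal{G}_{t-1}$ gives
$$\mathbb{E}_Q[W_t^2\mid\mathcal{G}_{t-1}]=W_{t-1}^2\,\mathbb{E}_Q[\rho_t^2\mid\mathcal{G}_{t-1}]=W_{t-1}^2\,(1+V_t).$$
This uses $\mathbb{E}[\rho_t\mid\mathcal{G}_{t-1}]=1$, so $\mathbb{E}[\rho_t^2\mid\mathcal{G}_{t-1}]=1+V_t$, where $V_t$ is exactly the local label sensitivity of the statement. Hence
$$\mathbb{E}_Q[W_t^2]-\mathbb{E}_Q[W_{t-1}^2]=\mathbb{E}_Q\big[W_{t-1}^2V_t\big].$$
Summing over $t=1,\dots,T$ and subtracting $(\mathbb{E}_QW_T)^2=1$ gives
$$\mathrm{Var}_Q(W_T)=\mathbb{E}_Q[W_T^2]-1=\sum_{t=1}^T\mathbb{E}_Q\big[W_{t-1}^2V_t\big].$$
This is the claimed identity; the increments $W_t-W_{t-1}$ are orthogonal martingale differences. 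The expectation over $h_{t-1}\sim Q$ in the statement is precisely this outer $\mathbb{E}_Q$.

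\textbf{Step 3 (geometric lower bound).} Suppose $V_t\ge\eta$. Step 2 gives $\mathbb{E}_Q[W_t^2]\ge(1+\eta)\,\mathbb{E}_Q[W_{t-1}^2]$. Induction from $\mathbb{E}_Q[W_0^2]=1$ yields $\mathbb{E}_Q[W_T^2]\ge(1+\eta)^T$, so $\mathrm{Var}_Q(W_T)\ge(1+\eta)^T-1$. Equivalently, we can plug $\mathbb{E}_Q[W_{t-1}^2]\ge(1+\eta)^{t-1}$ into the sum and evaluate the geometric series $\eta\sum_{t=1}^T(1+\eta)^{t-1}$. Calling this bound ``strict'' needs a small strengthening: either one step with $V_t>\eta$ on a set of positive probability, or reading the claim as $\ge$. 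I would state the bound as $\ge$ and note when strictness holds.

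\textbf{Main obstacle.} The delicate step is Step 1, namely checking that $\rho_t$ has conditional mean exactly $1$ under the simulator's own kernel. The numerator $P^{\pieval}(u_t\mid h_{t-1},\zsim)$ must be a genuine conditional distribution on the same history $h_{t-1}$, which includes the agent actions drawn from $\pieval$. The ratio must also be well defined, which is where absolute continuity enters. If $P^{\pieval}$ charges points where $\Psim$ vanishes, the mean drops below $1$ and the identity picks up a bias term, so I will state absolute continuity explicitly. I will also note that the Bayes-rule form $\rho_t=M_t^{\pieval}/M_t^{\pi_b}$ is not needed for the proof; it only interprets $V_t$.
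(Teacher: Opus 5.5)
Your proposal is correct and follows the paper's proof essentially step for step. Both arguments establish the mean-one martingale property of $W^{(u)}_t(\zsim)$ under $\Psim^{\pieval}(\cdot \mid \zsim)$ using absolute continuity, derive the conditional second moment $W^{(u)}_{t-1}(\zsim)^2(1+V_t)$, telescope via the law of total expectation, and close with the $(1+\eta)$ recursion; your caveat on ``strictly'' is also warranted, since that argument yields only $\ge (1+\eta)^T-1$, with equality whenever $V_t \equiv \eta$.
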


\paragraph{The ``Perfect Simulator'' Paradox and the Curse of Horizon.}
\Cref{thm:variance_explosion} exposes a structural paradox directly analogous to the curse of horizon in classic OPE literature \citep{liu2018breaking,liu2020understanding,lai2026estimating}. Because evaluation tests a novel policy ($\pieval \neq \pi_b$), the agent's action distribution must shift. This guarantees belief update misalignment, forcing $V_t \ge \eta > 0$. Even a ``perfect'' text generator suffers geometric variance explosion simply because evaluation diverges from behavior. Testing a new policy directly breaks this metric (demonstrated in \Cref{sec:policy_misalignment_example} and proved in \Cref{sec:martingale_mechanics}).

\section{Restoring Causal Consistency}
\label{sec:solutions}

As standard trajectory-conditioned controls are prone to look-ahead bias and controllability collapse, we must reframe how simulation is controlled. We propose three \emph{training-time interventions} which fundamentally change the variables on which we condition our training objective. 

\textbf{1. A-Priori Independence:}
To preserve causal filtration, we restrict the global control $z$ to be strictly $\mathcal{F}_0$-measurable. Instead of post-hoc labels, we condition only on information fixed \emph{before} the interaction, such as demographics, initial explicit intents, or latent cognitive profiles. Relying solely on these pre-interaction variables reduces the fractional bias multiplier in \Cref{thm:lookahead} to exactly 1.

\textbf{2. Step-wise Dynamic Controls:}
If trajectory-level constraints are not strictly required, we can avoid policy dependence entirely by fundamentally shifting the control objective from achieving a predefined global outcome to modeling the user's actual, evolving state. Instead of extracting fixed labels from full trajectories, we generate a \emph{step-wise dynamic control} state $z_t$ (e.g., actual turn-level emotional affect, cognitive load, or changing goal) at each turn, based \emph{only} on the observable history up to that point: $z_t \sim \PL(\cdot \mid h_{t-1}, u_t)$. Because $z_t$ is computed before the agent selects $a_t$, it is properly adapted to $\mathcal{F}_{t-}$. This provides a formal guarantee: $z_t$ is conditionally independent of the agent's future actions, which reduces the fractional bias multiplier in \Cref{thm:lookahead} exactly to 1 and enables unbiased sequential sampling. During offline data preparation, an LLM annotator labels states $z_t$ for each sub-trajectory up to that turn. A single generative simulator then learns the joint distribution $P(z_t, u_t \mid h_{t-1}) = P(z_t \mid h_{t-1}) P(u_t \mid h_{t-1}, z_t)$ via autoregressive maximum likelihood over the entire sequence. By first generating $z_t$ during inference, the simulator safely avoids the look-ahead bias of post-hoc labels. (While our empirical evaluation focuses on this purely autoregressive state generation, we provide a theoretical discussion on how one might explicitly parameterize these state transition rules to enforce long-horizon global constraints in \Cref{sec:parameterized_dynamics}).

\textbf{3. Direct Policy-Conditioned Learning:}
To resolve controllability collapse under covariate shift (\Cref{thm:variance_explosion}) while maintaining global trajectory controls, we can explicitly condition the user simulator on the target agent policy itself, learning $P(u_t \mid h_{t-1}, z, \pi)$. By embedding the target policy's parameters or system prompt into the user simulator during training, the generative model explicitly internalizes the policy dependence. This aligns the modeled belief updates with the evaluation policy $\pieval$, empirically aligning the modeled belief updates between $M_t^{\pieval}$ and $M_t^{\pi_b}$ and relying on the network's out-of-distribution generalization to mitigate the variance explosion.

\section{Experimental Evaluation}
\label{sec:experiments}

To empirically validate our theoretical findings, we structure our evaluation into two distinct phases: (1) an \emph{in-distribution} evaluation isolating the distortion caused by look-ahead bias (\Cref{thm:lookahead}), and (2) an \emph{out-of-distribution} evaluation isolating variance explosion under covariate shift (\Cref{thm:variance_explosion}). 

\subsection{Experimental Setup}
\label{sec:experimental_setup}

\textbf{Datasets.} We evaluate on two multi-turn datasets. First, we use \emph{WildChat} \citep{zhao2024wildchat} (filtered to multi-turn conversations) to test general dynamics. We also introduce \emph{ConvApparel-V2},\footnote{Link: https://huggingface.co/datasets/google/ConvApparel.} an extension of ConvApparel \citep{meshi2026convapparel}, introducing newly collected human interactions with \emph{ten distinct assistant personas} for the footwear category, each governed by a system prompt reflecting specific styles (e.g., domain expert, efficient matchmaker, etc.). Details are in \Cref{sec: convapparel prompts,sec:data_collection}.

\textbf{Control Variables and Annotation.} 
To construct training controls, we use Gemini 3.1 Pro \citep{gemini3} as an LLM annotator to label the offline datasets. We extract three types of controls based on full trajectories: \textit{Persona+Goal} (extracting explicit linguistic rules and tasks), \textit{Cognitive Profile} (extracting latent traits like system literacy), and \textit{Scenario Generation} (a holistic summary of how the interaction progressed). To isolate causal bias from simple 
LLM-induced
verbosity (e.g., generating more turns simply because of a long prompt), we also create a \textit{Length-Constrained Scenario} control that prompts the simulator to respect a strict length limit. For our dynamic mitigation, we annotate turns with \emph{step-wise dynamic} controls representing turn-level emotional affect, cognitive load, and implicit intent for each sub-trajectory. Complete prompts used for all annotations are described in \Cref{sec:prompt_templates}.

\textbf{Baselines.} 
We consider the following simulation baselines:
(1) \emph{Unconditioned SFT:} A baseline simulator SFT-trained on user tokens with no controls; (2) \emph{Prompted Simulator:} A zero-shot baseline where an off-the-shelf LLM is prompted with the global trajectory controls to act as the user; (3) \emph{Rejection Sampling:} An inference-time baseline that generates $N$ candidate trajectories from the Unconditioned SFT model and selects the one that best matches the target control (as determined by an LLM judge); (4) \emph{Trajectory-conditioned SFT (standard practice):} A simulator SFT-trained to maximize the likelihood of user tokens conditioned on post-hoc global controls (Persona+Goal, Scenario) prepended as a system prompt; (5) \emph{Cognitive Profile SFT (mitigating \Cref{thm:lookahead}):} Trajectory-conditioned SFT restricted to a cognitive profile constraint, attempting to avoid action-dependent look-ahead bias; (6) \emph{Dynamic State SFT (mitigating \Cref{thm:lookahead}):} Trained to predict the intermediate dynamic state annotation $z_t$ immediately before generating the user text $u_t$. (7) \emph{Agent-Aware SFT (mitigating \Cref{thm:variance_explosion}):} To specifically mitigate out-of-distribution controllability collapse, an SFT-trained model conditioned on both the user control and the target agent's system prompt.

\textbf{Automated Evaluation Loop.} To automate closed-loop offline testing, we use proxy SFT agents. For WildChat, we train an unconditioned agent on the dataset's empirical assistant responses. For ConvApparel, we train an agent conditioned on distinct assistant prompts, parameterizing the policy shift $\pieval$ by swapping prompts at inference time.

\textbf{Evaluation Metrics.}
We evaluate simulation quality across three dimensions to understand the tradeoffs involved in controllable simulation:
(1) \emph{Distributional Fidelity \& Semantic Drift}: Does the simulator behave like a natural human population, or do the controls induce unnatural conversations? We assess both deterministic statistics (turn counts, word lengths, etc.) and semantic intent;
(2) \emph{Control Adherence}: Does the simulator successfully reflect the requested controls? We measure this via an independent LLM-as-a-judge (scoring Persona and Goal match) as well as Gemini text-embedding cosine-similarity;
(3) \emph{Behavioral Diversity}: Do the generated responses exhibit natural human variance, or do they collapse into repetitive patterns? We quantify this by measuring Shannon entropy of the discrete adherence scores (evaluated by the LLM judge on a categorical scale).

\begin{table}[t!]
\centering
\caption{\footnotesize \textbf{In-Distribution Generative Fidelity and Semantic Drift (WildChat).} Unconditioned models truncate conversations; globally controlled models artificially inflate turns and suffer semantic drift. Bold indicates the conditioned simulator closest to the human baseline. Full metrics are provided in Appendix \ref{sec:extended_in_distribution}.}
\label{tab:in_dist_merged}
\renewcommand{\arraystretch}{1.2}
\resizebox{\textwidth}{!}{%
\begin{tabular}{lccccc}
\toprule
\textbf{Simulator Type} & \textbf{Turn Count} & \textbf{Avg. Words / Turn} & \textbf{Task Unresolved} & \textbf{Iterative Refine.} & \textbf{Highly Detailed} \\
\midrule
Ground Truth Data & 4.31 $\pm$ 0.01 & 113.3 $\pm$ 0.1 & 85.4\% & 48.4\% & 11.5\% \\
\midrule
Unconditioned SFT & 3.55 $\pm$ 0.01 & 85.6 $\pm$ 0.2 & 91.1\% & 32.3\% & 11.3\% \\
\midrule
Prompted Simulator (Cognitive) & 6.45 $\pm$ 0.01 & 88.3 $\pm$ 0.2 & 89.1\% & 19.2\% & 28.4\% \\
Prompted Simulator (Scenario) & 8.62 $\pm$ 0.02 & 68.5 $\pm$ 0.2 & 87.4\% & 17.8\% & 41.2\% \\
Prompted (Scenario, Length-Constr.) & 4.45 $\pm$ 0.01 & 92.2 $\pm$ 0.2 & 88.0\% & 19.5\% & 38.2\% \\
\midrule
Cognitive Profile Conditioned SFT & 5.19 $\pm$ 0.01 & 95.1 $\pm$ 0.2 & \textbf{83.8\%} & 29.1\% & \textbf{14.1\%} \\
Trajectory-Cond. SFT (Scenario) & 7.37 $\pm$ 0.01 & 81.4 $\pm$ 0.1 & 83.2\% & 28.7\% & 24.5\% \\
Dynamic State Conditioned SFT (Ours) & \textbf{4.78 $\pm$ 0.01} & \textbf{109.5 $\pm$ 0.2} & 76.8\% & \textbf{30.2\%} & 21.1\% \\
\bottomrule
\end{tabular}%
}
\end{table}

\subsection{In-Distribution: Testing Look-Ahead Bias and Adherence}
\label{sec:in_distribution_eval}

We first assess whether models adhere to controls without distorting the natural user behavior distribution, evaluating \Cref{thm:lookahead} on the WildChat dataset (in-distribution results for ConvApparel exhibit the same trends and are described in \Cref{sec:detailed_eval_tables}). We present a condensed summary of key linguistic and behavioral metrics in \Cref{tab:in_dist_merged}; complete tables with all metrics, adherence scores, and entropies are provided in Appendix \ref{sec:extended_in_distribution}.

\textbf{Length Inflation and Semantic Drift.} Standard trajectory-conditioned SFT models structurally over-correct due to look-ahead bias. Because the label drives toward an outcome from the end of the offline trajectory, the simulator artificially prolongs interactions to ensure constraints are met. For example, Scenario-Conditioned SFT averages 7.37 turns, and the Prompted version hits 8.62, compared to the human baseline of 4.31. 
This global bias triggers severe semantic drift. Real users construct ``Highly Detailed'' prompts 11.5\% of the time, whereas standard trajectory-conditioned models double this rate ($\sim$24.5\%) by forcibly injecting constraints upfront. Furthermore, they drastically under-use Iterative Refinement (28.7\% vs.\ human 48.4\%, \Cref{tab:in_dist_merged}). This suggests that the simulator abandons natural user behavior in favor of rigid, transactional compliance simply to satisfy the global control.

We verify this to be a structural causal violation and not mere LLM verbosity. When our \emph{Length-Constrained Scenario} baseline explicitly forces the global prompt to respect a 4-turn limit, it successfully caps the turn count at 4.45. However, semantic drift remains severe (38.2\% Highly Detailed prompts and 19.5\% Iterative Refinement). Conversely, our \emph{Dynamic State SFT} simulator resolves these distortions. By generating states purely autoregressively, it prioritizes step-wise coherence and does not artificially enforce long-horizon global outcomes. This represents a fundamental trade-off: bypassing post-hoc labels eliminates semantic drift and length inflation, but trades away strict global controllability (unless utilizing explicitly parameterized state dynamics, as discussed in \Cref{sec:parameterized_dynamics}).

\begin{figure}[t!]
\begin{minipage}{0.48\textwidth}
\centering
\includegraphics[width=\linewidth]{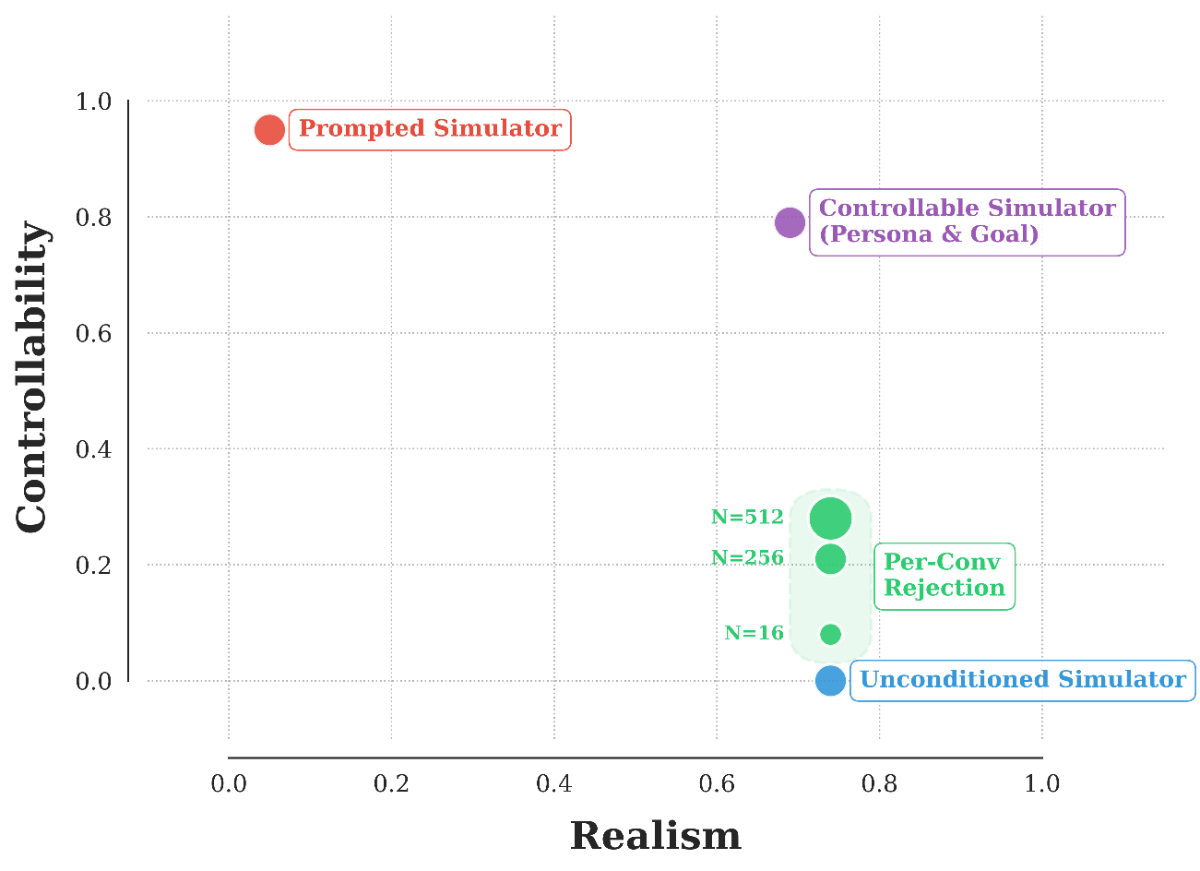}
\caption{\footnotesize \textbf{Realism-Controllability Trade-off.} Explicitly prompted models achieve high adherence but suffer severe diversity collapse. Best-of-N sampling fails completely. Causally conditioned models offer a superior Pareto frontier.}
\label{fig: realism controllability tradeoff}
\end{minipage}
\hfill
\begin{minipage}{0.48\textwidth}
\captionof{table}{\footnotesize \textbf{Control Adherence Scores.} Mean $\pm$ 95\% CI via LLM Judge and embedding cosine-similarity.}
\label{tab:adherence_scores}
\renewcommand{\arraystretch}{1.2}
\resizebox{\linewidth}{!}{%
\begin{tabular}{lcc}
\toprule
\textbf{Simulator Type} & \textbf{Persona Match} & \textbf{Embed. Sim.} \\
\midrule
Ground Truth (Persona+Goal) & $0.89 \pm 0.01$ & $0.87 \pm 0.01$ \\
Prompted Sim (Persona+Goal) & $0.96 \pm 0.01$ & $0.93 \pm 0.01$  \\
Traj-Cond. SFT (Persona+Goal) & $0.77 \pm 0.01$  & $0.69 \pm 0.01$ \\
\midrule
Ground Truth (Scenario) & $0.92 \pm 0.01$ & $0.88 \pm 0.01$ \\
Prompted Sim (Scenario) & $0.85 \pm 0.01$ & $0.82 \pm 0.01$ \\
Traj-Cond. SFT (Scenario) & $0.82 \pm 0.01$ & $0.69 \pm 0.01$ \\
\midrule
Ground Truth (Cognitive) & $0.72 \pm 0.01$ & $0.70 \pm 0.01$ \\
Prompted Sim (Cognitive) & $0.90 \pm 0.01$ & $0.88 \pm 0.01$ \\
Cognitive Profile SFT & $0.56 \pm 0.01$ & $0.52 \pm 0.01$ \\
\bottomrule
\end{tabular}%
}
\end{minipage}
\end{figure}

\textbf{Constraint Adherence.}
Agent evaluators need simulators to faithfully follow targeted constraints. As seen in \Cref{tab:adherence_scores}, explicitly prompted simulators achieve high adherence to target constraints, approaching real-data baselines (e.g., $0.90$ Persona Match for cognitive profiles). Our trajectory-conditioned SFT models also show reasonable adherence for explicit tasks ($0.82$ Persona Match for Scenarios). However, text-based simulation struggles with latent traits; our \textit{a priori} approach to mitigation (cognitive profile conditioned SFT) displays poor adherence ($0.56$ persona match), suggesting that controlling text-based generation using abstract, latent controls is difficult. Finally, we note that since \emph{Dynamic State SFT} lacks an explicit global constraint mechanism, comparing its global adherence against these trajectory-forced baselines would be confounded; it is therefore appropriately excluded from \Cref{tab:adherence_scores}). We add a discussion on global control of dynamics in \Cref{sec:parameterized_dynamics}.

\textbf{Behavioral Diversity Collapse.}
While global controls can achieve high adherence, they fundamentally distort natural behavioral diversity. We quantify this by measuring the Shannon entropy over the discrete adherence scores (see full results in \Cref{sec:extended_in_distribution,tab:shannon_entropy_full}). Explicitly prompted (non-SFT-trained) simulators severely compress variance of real human behavior to adhering to complex controls like Persona+Goal (driving entropy down from the human baseline of 1.04 to 0.728) and cognitive profiles (from 1.10 down to 0.636). The models collapse to finding a ``safe" generative path to satisfy the prompt, overriding the natural diversity of human behavior. Conversely, trajectory-conditioned SFT models artificially inflate this variance (entropies ranging from 1.28 to 1.64), but do so at the cost of structural length inflation and semantic drift as discussed above.

Attempting to avoid this adherence-diversity trade-off via inference-time \emph{Rejection Sampling} fails completely (see \Cref{fig: realism controllability tradeoff}). Searching the vast combinatorial space of multi-turn dialogue is computationally intractable, yielding near-zero adherence even with massive compute overhead ($N=512$ candidates per conversation, see further results in Appendix \ref{sec:rejection_sampling_appendix}).

\begin{table}[t!]
\centering
\caption{\footnotesize \textbf{Condensed Counterfactual Stability under Covariate Shift.} Evaluated zero-shot on unseen agent policies. The standard Static Agnostic approach collapses when facing the terse ``Efficient Matchmaker''. (Full comprehensive metrics for all four held-out agents are provided in \Cref{sec:detailed_eval_tables}).}
\label{tab:off_policy_condensed}
\renewcommand{\arraystretch}{1.1}
\resizebox{\textwidth}{!}{%
\begin{tabular}{llcccc}
\toprule
\multirow{2}{*}{\makecell[l]{\textbf{Held-Out}\\\textbf{Evaluation Agent}}} & \multirow{2}{*}{\makecell[l]{\textbf{Simulator}\\\textbf{Paradigm}}} & \multicolumn{2}{c}{\textbf{Linguistic Statistics}} & \multicolumn{2}{c}{\textbf{Behavioral Intent (\%)}} \\
\cmidrule(lr){3-4} \cmidrule(lr){5-6}
 & & \makecell[c]{\textbf{Avg. Words}\\\textbf{per Turn}} & \makecell[c]{\textbf{Total User}\\\textbf{Words}} & \makecell[c]{\textbf{Negative}\\\textbf{Sentiment}} & \makecell[c]{\textbf{Iterative}\\\textbf{Refinement}} \\
\midrule
\multirow{5}{*}{\makecell[l]{Domain Expert\\(Academic, Verbose)}} 
& Ground Truth Data & 12.0 & 38.7 & 13.4 & 31.7 \\
& Static, Agnostic (Standard) & 8.8 & 28.0 & 18.0 & 12.5 \\
& Static, Aware & 10.5 & 32.0 & 15.1 & 25.0 \\
& Dynamic, Agnostic & 11.2 & 35.1 & 14.5 & 22.3 \\
& Dynamic, Aware (Ours) & \textbf{12.1} & \textbf{38.5} & \textbf{13.6} & \textbf{31.2} \\
\midrule
\multirow{5}{*}{\makecell[l]{Efficient Matchmaker\\(Ultra-terse, Fast)}} 
& Ground Truth Data & 11.7 & 39.7 & 11.9 & 41.3 \\
& Static, Agnostic (Standard) & 25.4 & 92.1 & 26.8 & 8.8 \\
& Static, Aware & 13.2 & 43.5 & 16.4 & 32.6 \\
& Dynamic, Agnostic & 14.1 & 45.2 & 18.2 & 24.8 \\
& Dynamic, Aware (Ours) & \textbf{11.8} & \textbf{40.1} & \textbf{12.2} & \textbf{40.9} \\
\bottomrule
\end{tabular}%
}
\end{table}

\begin{table}[t!]
\centering
\caption{\footnotesize \textbf{Empirical Validation of Variance Explosion (\Cref{thm:variance_explosion}).} Normalized variance ratio $\text{Var}_{\text{sim}} / \text{Var}_{\text{GT}}$ of Total User Words, stratified by conversation horizon $T$ across all four held-out ConvApparel agents. Ground truth variance is computed from real human data. A ratio of $1.0$ indicates perfect alignment with real human variance.}
\label{tab:variance_explosion}
\renewcommand{\arraystretch}{1.2}
\resizebox{\textwidth}{!}{%
\begin{tabular}{lcccccl}
\toprule
\textbf{Simulator Paradigm} & $T=2$ & $T=3$ & $T=4$ & $T=5$ & $T \ge 6^\ast$ & \textbf{Avg. Step-wise Growth ($\hat{\eta}$)} \\
\midrule
Ground Truth Data & $1.00$ & $1.00$ & $1.00$ & $1.00$ & $1.00$ & --- \\
\midrule
Static Global, Agent-Agnostic & $1.76$ & $2.36$ & $3.52$ & $4.89$ & $7.19$ & $\hat{\eta} \approx 0.42$ \\
Static Global, Agent-Aware & $1.28$ & $1.34$ & $1.35$ & $1.81$ & $2.14$ & $\hat{\eta} \approx 0.14$ \\
Dynamic State, Agent-Agnostic & $1.14$ & $1.15$ & $1.29$ & $1.35$ & $1.44$ & $\hat{\eta} \approx 0.06$ \\
Dynamic State, Agent-Aware (Ours) & $\mathbf{1.03}$ & $\mathbf{0.99}$ & $\mathbf{1.05}$ & $\mathbf{1.02}$ & $\mathbf{1.07}$ & $\hat{\eta} \approx 0.01$ \\
\midrule
\multicolumn{7}{l}{\footnotesize \textit{Reference:} $\text{Var}_{\text{GT}}(\text{Total Words})$: $T\!=\!2$: $185$; \; $T\!=\!3$: $419$; \; $T\!=\!4$: $606$; \; $T\!=\!5$: $975$; \; $T\!\ge\!6^\ast$: $4{,}835$} \\
\multicolumn{7}{l}{\footnotesize $^\ast$ \textit{Note:} The $T \ge 6$ column aggregates the long-tail of conversations. We treat this conceptually as $T=6$ for growth estimates.} \\
\bottomrule
\end{tabular}%
}
\end{table}

\subsection{Out-of-Distribution: Testing Controllability Collapse}
\label{sec:ood_eval}

To test whether simulators suffer from controllability collapse under covariate shift ($\pi_e \neq \pi_b$), we construct a train/test split utilizing the 13 footwear agents. We train our controllable user simulator via SFT on interactions with the nine in-distribution agents, and evaluate them zero-shot in closed-loop conversations with the four strictly held-out personas.

\Cref{thm:variance_explosion} bounds the variance of the trajectory density ratio (the importance sampling weights). As a proxy to assess this breakdown empirically, we measure the variance of an additive trajectory feature: \emph{total user words}. In an uncorrected simulator, the compounding instability of step-wise density ratios manifests as extreme trajectory-level fluctuations in such additive metrics. \Cref{tab:off_policy_condensed} and \Cref{tab:variance_explosion} confirm this breakdown: these results expose both severe mean distortion and compounding variance instability.\footnote{While word count is an additive feature bounded by maximum length, its rapid step-wise variance growth serves as a strong empirical proxy for the density ratio's theoretical collapse. For the interested reader, we provide additional direct empirical validation of the true multiplicative density ratio collapse using offline classifiers in \Cref{sec:direct_density_validation}.} When standard trajectory-conditioned SFT (Static, Agent-Agnostic) encounters the terse ``Efficient Matchmaker,'' it inflates word counts ($25.4$ vs.\ $11.7$ human) and negative sentiment (26.8\% vs.\ 11.9\% human). Conversely, with the verbose ``Domain Expert," it unnaturally compresses responses. Since the policy-agnostic simulator produces heterogeneous mean errors across different policies, this failure compounds multiplicatively over time. As a result, its aggregate normalized variance ($\text{Var}_{\text{sim}} / \text{Var}_{\text{GT}}$) explodes, growing by an average factor of $\sim 1.42\times$ \emph{at each turn} ($\hat{\eta} \approx 0.42$).

Explicitly resolving these structural discrepancies restores stability. Employing the \emph{Dynamic State} formulation avoids look-ahead bias by adapting only to the step-wise observable history, while \emph{Agent-Aware SFT} directly resolves any policy-induced mismatched expectations. The combined \emph{Dynamic State, Agent-Aware} approach achieves near-perfect human parity ($11.8$ words/turn), mirrors human subtle behavioral adaptations to new styles (e.g., matching negative sentiment at 12.2\% with the Matchmaker) and maintains a highly stable variance ratio across conversation horizons ($\hat{\eta} \approx 0.01$). 

While total user words serve as a clear downstream proxy for this failure, \Cref{thm:variance_explosion} strictly predicts the geometric explosion of the underlying mathematical trajectory density ratio. We provide an additional direct empirical validation of this probabilistic collapse (evaluating $P(z \mid h_t)$ directly via offline classifiers) in \Cref{sec:direct_density_validation} (\Cref{tab:variance_explosion_density}).

\textbf{Black-Box Evaluation Policies.}
Direct policy conditioning assumes ``white-box'' access to the evaluation policy $\pieval$. In black-box environments (e.g., proprietary weights or complex RAG pipelines), feeding a system prompt is insufficient. However, step-wise dynamic control natively bypasses this limitation: by conditioning strictly on the observable filtration $\mathcal{F}_{t-}$, it preserves causal consistency and prevents variance explosion without requiring any visibility into the agent's internal architecture.

\section{Conclusion}
\label{sec:conclusion}

Reliable offline evaluation requires simulating counterfactuals without deviating from natural user distributions. We show that in standard SFT simulators, static trajectory labels inject \emph{look-ahead bias} violating interaction filtration. This causes distributional distortions, diversity collapse, and generative breakdown under policy shifts. Our proposed causally grounded mitigations ($\mathcal{F}_0$-measurable traits, step-wise dynamic controls, and direct policy-conditioned learning) empirically eliminate this bias, preserve natural variance, and enable robust zero-shot generalization. 

While these mitigations provide a rigorous foundation, several avenues for future work remain. Adapting direct policy conditioning (which currently assumes white-box agent access) for black-box environments like proprietary APIs is a compelling next step. Furthermore, extending our dynamic state tracking to enforce explicit long-horizon constraints opens a rich design space. Ultimately, integrating representation and reinforcement learning could automate the discovery of optimal dynamic control profiles, allowing evaluators to seamlessly steer complex user behaviors while preserving causal fidelity.

\bibliography{bibliography}
\bibliographystyle{plainnat}

\newpage
\appendix
\crefalias{section}{appendix}

\section{Discussion: Enforcing Global Constraints via Parameterized State Dynamics}
\label[appendix]{sec:parameterized_dynamics}

In the main text, we established that step-wise dynamic control elegantly resolves the mathematical pitfalls of off-policy evaluation by generating states conditionally on the observable history. However, predicting the next local state purely autoregressively leaves open the question of how to exert targeted, global influence over the trajectory. If an evaluator wishes to simulate a user whose emotional state degrades predictably when an agent is overly repetitive, standard step-wise generation lacks an explicit mechanism to enforce this trajectory-level constraint without reverting to flawed post-hoc labels. To enforce global behavioral shapes while strictly avoiding future-leaking outcome conditioning, we propose a bipartite architectural framework that explicitly parameterizes the transition dynamics.

This framework is realized by decoupling the user simulator into two distinct generative components: a state dynamics model and a user response model. Instead of conditioning generation on an eventual trajectory outcome, we define two control variables that are strictly adapted to the initial filtration. The first is an initial state variable $z_0$, which anchors the user's starting intent or emotional baseline. The second is a global dynamics control parameter $c$. This parameter acts as an overarching profile or an embedded set of transition rules that governs the mechanics of how the user's internal state evolves in response to the agent. Because both $z_0$ and $c$ are determined strictly prior to the interaction, they possess no backward causal links to the agent's future actions.

The sequential generative process operates chronologically. At any time step $t \ge 1$, the user possesses a latent internal state $z_t$. This state is updated from the previous state by processing the observable history up to the previous turn and the structural laws dictated by the profile, formalized as the state dynamics model $P_{\text{dyn}}(z_t \mid h_{t-1}, z_{t-1}, c)$. Following this internal update, the user response model samples the conversational utterance conditionally based on the observable history and this current state, formalized as $P_{\text{resp}}(u_t \mid h_{t-1}, z_t)$. Finally, the target agent observes the history and selects an action according to its policy $\pi(a_t \mid h_{t-1}, u_t)$, producing the updated history $h_t = (h_{t-1}, u_t, a_t)$ and completing the cycle.

Through this decoupled architecture, evaluators control long-horizon behavior implicitly. Rather than prompting a simulator to end the conversation in frustration, an evaluator supplies a dynamics profile $c$ dictating that repeated agent misunderstandings deterministically degrade the user's patience state $z_t$, which the response model subsequently translates into hostile text. 

Below, we show that this architecture guarantees complete causal consistency and statistical stability under arbitrary off-policy covariate shift.

\begin{theorem}[Causal Consistency and Stability of Parameterized State Dynamics]
\label{thm:dual_model_stability}
Let a user simulation process be parameterized by strictly $\mathcal{F}_0$-measurable variables $z_0$ and $c$, alongside a state transition kernel $P_{\text{dyn}}(z_t \mid h_{t-1}, z_{t-1}, c)$ and a response kernel $P_{\text{resp}}(u_t \mid h_{t-1}, z_t)$. Assuming an unbiased simulator ($\text{P}_{\text{sim}} = \text{P}^{\pi_b}$), for any arbitrary evaluation policy $\pieval$ and behavior policy $\pi_b$, the look-ahead bias evaluates to exactly one, and the step-wise counterfactual user density ratio $\rho_t(u_t \mid h_{t-1}, z_0, c)$ evaluates to exactly one almost surely. Consequently, the local label sensitivity $V_t$ equates to zero, completely preventing geometric variance explosion.
\end{theorem}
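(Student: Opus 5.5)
The proof rests on one structural fact: under the generative process of the parameterized model, the controls $(z_0,c)$ and the latent states are never descendants of any agent action. Once that is established, the factors in \Cref{thm:lookahead} and \Cref{thm:variance_explosion} cancel pointwise.

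\emph{Step 1: joint law and conditional independence.} Write the joint law of $(z_0,c,z_{1:T},\tau)$ under an arbitrary policy $\pi$:
\begin{align*}
P^\pi(z_0,c,z_{1:T},\tau) = P(z_0,c)\prod_{t=1}^T P_{\text{dyn}}(z_t\mid h_{t-1},z_{t-1},c)\,P_{\text{resp}}(u_t\mid h_{t-1},z_t)\,\pi(a_t\mid h_{t-1},u_t).
\end{align*}
In this factorization, $P(z_0,c)$ does not involve $\pi$, because the controls are $\mathcal{F}_0$-measurable. The agent factor $\pi(a_t\mid h_{t-1},u_t)$ depends only on $(h_{t-1},u_t)$, so $a_t$ is conditionally independent of $(z_0,c,z_{1:t})$ given $\mathcal{F}_{t-}$.

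Summing the joint law over $a_t$ and all later variables, each later factor is a conditional kernel and integrates to one. This gives
\begin{align*}
P^\pi(z_0,c\mid h_{t-1},u_t,a_t)=P^\pi(z_0,c\mid h_{t-1},u_t).
\end{align*}
Substituting this into \Cref{eq_thrm1} makes every ratio $P^{\pi_b}(z_0,c\mid h_{t-1},u_t)/P^{\pi_b}(z_0,c\mid h_{t-1},u_t,a_t)$ equal to one, so the look-ahead bias factor is exactly one. Equivalently, the labeling is not action-dependent.

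\emph{Step 2: policy invariance of the user kernel.} Marginalizing the latent states gives
\begin{align*}
P^\pi(u_t\mid h_{t-1},z_0,c)=\sum_{z_{1:t}} P^\pi(z_{1:t-1}\mid h_{t-1},z_0,c)\,P_{\text{dyn}}(z_t\mid h_{t-1},z_{t-1},c)\,P_{\text{resp}}(u_t\mid h_{t-1},z_t).
\end{align*}
The task is to show that the filtering posterior $P^\pi(z_{1:t-1}\mid h_{t-1},z_0,c)$ does not depend on $\pi$. In the Bayes ratio over $z_{1:t-1}$, the agent terms $\prod_{s<t}\pi(a_s\mid h_{s-1},u_s)$ do not involve $z_{1:t-1}$, so they appear identically in numerator and denominator and cancel. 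What remains is a ratio of products of $P_{\text{dyn}}$ and $P_{\text{resp}}$ factors only, which is policy-free.

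Hence $P^{\pieval}(u_t\mid h_{t-1},z_0,c)=P^{\pi_b}(u_t\mid h_{t-1},z_0,c)$. Combined with the unbiasedness assumption $\Psim=P^{\pi_b}$, this gives $\rho_t\equiv1$ almost surely on the support, using the absolute-continuity assumption made in the problem formulation. This is the belief-update form $M_t^{\pieval}=M_t^{\pi_b}$ specialized to the present model.

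\emph{Step 3: variance.} Since $\rho_t\equiv 1$, the variance $V_t$ of a constant is zero, and $W^{(u)}_{t}\equiv1$ for every $t$. The decomposition in \Cref{thm:variance_explosion} then gives total variance $0$, so no geometric growth occurs.

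The main obstacle is Step 2. The cancellation of $\pi$ there must be checked carefully, because the latent $z_t$ depends on past agent actions through $h_{t-1}$. This dependence is harmless: conditioning on the realized $h_{t-1}$ fixes those actions, and the policy's probabilities for them enter only as a $z$-independent multiplicative constant. If latent states are continuous, the sums are replaced by integrals against a dominating measure, and the same argument applies.
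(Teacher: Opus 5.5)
Your proposal is correct and follows essentially the same argument as the paper: you use $(z_0,c)\perp\!\!\!\perp a_t \mid \mathcal{F}_{t-}$ (since $\pi(a_t\mid h_{t-1},u_t)$ ignores the controls) to make every factor in \Cref{eq_thrm1} equal to one. You then observe that the realized policy product $\prod_{s<t}\pi(a_s\mid h_{s-1},u_s)$ does not depend on the latent path $z_{1:t-1}$ and cancels, so $P^{\pi}(u_t\mid h_{t-1},z_0,c)$ is policy-free, $\rho_t\equiv 1$ and $V_t=0$; the only difference is cosmetic, in that you phrase this cancellation through the filtering posterior $P^{\pi}(z_{1:t-1}\mid h_{t-1},z_0,c)$, whereas the paper factors the policy product out of both $P^{\pi}(u_t,h_{t-1}\mid z_0,c)$ and $P^{\pi}(h_{t-1}\mid z_0,c)$ and divides.
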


\begin{proof}
We first address the look-ahead bias. By \Cref{thm:lookahead}, look-ahead bias is quantified by the ratio of the conditional probability of the control variables given the pre-action history to their probability given the post-action history. Because the controls $(z_0, c)$ are defined entirely prior to the conversation, they act as fixed causal ancestors. The agent selects its action $a_t$ according to a policy $\pi_b(a_t \mid h_{t-1}, u_t)$ that is solely a function of the observable history. Consequently, the unobserved global controls are conditionally independent of the action $a_t$ given the pre-action history, yielding $(z_0, c) \perp\!\!\!\perp a_t \mid h_{t-1}, u_t$. Applying Bayes' theorem to the post-action probability expands the denominator into a fraction where the likelihood term $\pi_b(a_t \mid h_{t-1}, u_t, z_0, c)$ simplifies directly to $\pi_b(a_t \mid h_{t-1}, u_t)$. Because this term is independent of the unobserved controls, it factors entirely out of the marginalization integral over the controls and perfectly cancels with the identical policy term in the numerator. The posterior probability thus mathematically collapses to the prior probability $P^{\pi_b}(z_0, c \mid h_{t-1}, u_t)$, forcing the bias ratio to equal one and structurally preventing any causal leakage from future events.

We next address controllability collapse by analyzing the user generative error $\rho_t(u_t \mid h_{t-1}, z_0, c)$, which fundamentally drives the variance explosion in \Cref{thm:variance_explosion}. We evaluate the conditional marginal probability of a user utterance $u_t$ given the observable history and the global controls under an arbitrary agent policy $\pi$, denoted as $P^\pi(u_t \mid h_{t-1}, z_0, c)$. By the definition of conditional probability, this is the ratio of the joint density of the utterance and the history over the marginal density of the history. To obtain these quantities, we integrate out the unobserved latent state trajectory $z_{1:t}$. Applying the chain rule of probability to the causal sequence of the environment, the marginal probability of the history up to step $t-1$ expands as the integral over the latent path $z_{1:t-1}$:
\begin{align}
P^\pi(h_{t-1} \mid z_0, c) &= \int \prod_{k=1}^{t-1} \Big[ P_{\text{resp}}(u_k \mid h_{k-1}, z_k) P_{\text{dyn}}(z_k \mid h_{k-1}, z_{k-1}, c) \pi(a_k \mid h_{k-1}, u_k) \Big] \, dz_{1:t-1}.
\end{align}
Because we condition our generative step on a strictly observed history sequence $h_{t-1}$, the specific sequence of past agent actions and user utterances are fixed constants in this context. Thus, the product of the agent's action probabilities contains no variables dependent on the integration variables $z_{1:t-1}$. This allows the entire policy product term to factor completely outside of the integral:
\begin{align}
P^\pi(h_{t-1} \mid z_0, c) &= \left( \prod_{k=1}^{t-1} \pi(a_k \mid h_{k-1}, u_k) \right) \int \prod_{k=1}^{t-1} P_{\text{resp}}(u_k \mid h_{k-1}, z_k) P_{\text{dyn}}(z_k \mid h_{k-1}, z_{k-1}, c) \, dz_{1:t-1}.
\end{align}
We apply this exact factorization to the numerator to calculate the joint density $P^\pi(u_t, h_{t-1} \mid z_0, c)$. The term representing the agent's past choices factors out of the integration over $z_{1:t}$ in an identical manner:
\begin{align}
P^\pi(u_t, h_{t-1} \mid z_0, c) &= \left( \prod_{k=1}^{t-1} \pi(a_k \mid h_{k-1}, u_k) \right) \int P_{\text{resp}}(u_t \mid h_{t-1}, z_t) P_{\text{dyn}}(z_t \mid h_{t-1}, z_{t-1}, c) \nonumber \\
&\quad \times \prod_{k=1}^{t-1} P_{\text{resp}}(u_k \mid h_{k-1}, z_k) P_{\text{dyn}}(z_k \mid h_{k-1}, z_{k-1}, c) \, dz_{1:t}.
\end{align}
Dividing the numerator by the denominator isolates $P^\pi(u_t \mid h_{t-1}, z_0, c)$. The factored policy product term appears identically in both the top and bottom of the fraction, completely canceling out. This analytical cancellation demonstrates that the true, marginalized probability of the user's action $u_t$ is purely a function of the structural transition kernels and is strictly independent of the agent policy $\pi$. Therefore, evaluating the true user dynamics under the evaluation policy $\pieval$ yields the exact same distribution as under the behavior policy $\pi_b$. Assuming an unbiased simulator that successfully captures this causal factorization, the step-wise density ratio $\rho_t(u_t \mid h_{t-1}, z_0, c)$ becomes perfectly deterministic and identically equal to one for all possible utterances $u_t$. Consequently, its variance across the step-wise generation, defined as the local label sensitivity $V_t$, evaluates to exactly zero. Substituting $V_t = 0$ into the recursive expectation bound established in \Cref{thm:variance_explosion} ensures that the total counterfactual variance structurally evaluates to zero, completing the proof.
\end{proof}

This bipartite dynamic framework fundamentally transforms controllable simulation from an outcome-generation paradigm into an inverse design problem. Evaluators cannot simply pass a target outcome to the model during inference and expect a causally sound interaction. Instead, creating targeted counterfactual datasets requires discovering the optimal combinations of initial states $z_0$ and dynamics profiles $c$ that naturally induce the desired outcomes when simulated against specific target policies. This opens a critical and highly complex avenue for future work in prompt optimization and representation learning. We envision future research exploring reinforcement learning applied to the state dynamics model's meta-parameters, continuous embedding optimization, or gradient-based discrete search to automatically discover the optimal dynamics control profiles. By shifting the objective from directly generating text that matches an outcome to optimizing the rules of an environment such that the desired outcome emerges naturally, the field can generate highly specific, stress-tested conversational datasets that strictly adhere to causal mathematics.

\section{Proof of Divergence in Trajectory-Labeled User Simulators}
\label[appendix]{sec:proof}

We analyze the explicit density ratio between the true trajectory distribution $P^\pi(\tau)$ under an arbitrary policy $\pi$ and the distribution $\Psim^\pi(\tau)$ generated by a causally trained controllable simulator. The learned simulator dynamics converge to the empirical conditional distribution: $\Psim(u_t \mid h_{t-1}, \zsim) \triangleq P^{\pi_b}(u_t \mid h_{t-1}, \zsim)$.

Using Bayes' theorem, we expand this conditional density. Because the user's base unconditional action $u_t$ depends only on the natural filtration $\mathcal{F}_{t-1}$ and not on the future behavior policy, $P^{\pi_b}(u_t \mid h_{t-1})$ reduces to $P(u_t \mid h_{t-1})$. Thus:
$$\Psim(u_t \mid h_{t-1}, \zsim) = P(u_t \mid h_{t-1}) \frac{P^{\pi_b}(\zsim \mid h_{t-1}, u_t)}{P^{\pi_b}(\zsim \mid h_{t-1})}$$

During evaluation, the simulator's induced probability density of a trajectory $\tau$ under $\pi$ is given by marginalizing over the control space:
$$\Psim^\pi(\tau) = \int_{\gZ} P^{\pi_b}(\zsim \mid h_0) \left[ \prod_{t=1}^T \Psim(u_t \mid h_{t-1}, \zsim) \pi(a_t \mid h_{t-1}, u_t) \right] d\zsim$$

Substituting the Bayesian expansion into this integral:
$$\Psim^\pi(\tau) = \int_{\gZ} P^{\pi_b}(\zsim \mid h_0) \left[ \prod_{t=1}^T P(u_t \mid h_{t-1}) \pi(a_t \mid h_{t-1}, u_t) \frac{P^{\pi_b}(\zsim \mid h_{t-1}, u_t)}{P^{\pi_b}(\zsim \mid h_{t-1})} \right] d\zsim$$

Recognizing the true trajectory density $P^\pi(\tau) = \prod_{t=1}^T P(u_t \mid h_{t-1}) \pi(a_t \mid h_{t-1}, u_t)$, we pull it out:
$$\Psim^\pi(\tau) = P^\pi(\tau) \int_{\gZ} P^{\pi_b}(\zsim \mid h_0) \left[ \prod_{t=1}^T \frac{P^{\pi_b}(\zsim \mid h_{t-1}, u_t)}{P^{\pi_b}(\zsim \mid h_{t-1})} \right] d\zsim$$

We expand the product. The state history updates as $h_t = (h_{t-1}, u_t, a_t)$. Thus, the denominator at step $t+1$, $P^{\pi_b}(\zsim \mid h_t)$, is equivalently $P^{\pi_b}(\zsim \mid h_{t-1}, u_t, a_t)$. The learned prior $P^{\pi_b}(\zsim \mid h_0)$ cancels the denominator of the very first step. To align the sequence, we multiply and divide by $P^{\pi_b}(\zsim \mid h_{T-1}, u_T, a_T)$, which is exactly $\PL(\zsim \mid \tau)$:
$$\Psim^\pi(\tau) = P^\pi(\tau) \int_{\gZ} \PL(\zsim \mid \tau) \left[ \prod_{t=1}^T \frac{P^{\pi_b}(\zsim \mid h_{t-1}, u_t)}{P^{\pi_b}(\zsim \mid h_{t-1}, u_t, a_t)} \right] d\zsim$$

Dividing both sides by $P^\pi(\tau)$ yields the explicit density ratio, confirming the compounding look-ahead bias due to the violation of the natural filtration. \qedsymbol

\section{Proof of OPE Controllability Collapse}
\label[appendix]{sec:martingale_mechanics}

We provide the proof for \Cref{thm:variance_explosion}. Let $P^{\pieval}(\tau \mid \zsim)$ be the true conditional path distribution. Let $\Psim^{\pieval}(\tau \mid \zsim)$ be the sampling path distribution induced by the simulator $\Psim(u_t \mid h_{t-1}, \zsim)$ and $\pieval$.

We define the isolated user sampling density ratio process $W^{(u)}_t(\zsim) \triangleq \prod_{k=1}^t \rho_k(u_k \mid h_{k-1}, \zsim)$. By expanding using Bayes' rule, the base unconditional user dynamics $P(u_k \mid h_{k-1})$ cancel exactly:
$$ \rho_k(u_k \mid h_{k-1}, \zsim) = \frac{P^{\pieval}(u_k \mid h_{k-1}, \zsim)}{\Psim(u_k \mid h_{k-1}, \zsim)} = \frac{P^{\pieval}(\zsim \mid h_{k-1}, u_k) / P^{\pieval}(\zsim \mid h_{k-1})}{P^{\pi_b}(\zsim \mid h_{k-1}, u_k) / P^{\pi_b}(\zsim \mid h_{k-1})} = \frac{M_k^{\pieval}(u_k)}{M_k^{\pi_b}(u_k)} $$

First, we verify that $W^{(u)}(\zsim)$ is a martingale adapted to the filtration $\mathcal{F}_t$ under the conditional sampling measure $\Psim^{\pieval}(\cdot \mid \zsim)$. 
\begin{align*}
\mathbb{E}_{\tau \sim \Psim^{\pieval}(\cdot \mid \zsim)}[W^{(u)}_t(\zsim) \mid \mathcal{F}_{t-1}] &= W^{(u)}_{t-1}(\zsim) \mathbb{E}_{u_t \sim \Psim(\cdot \mid h_{t-1}, \zsim)} \left[ \frac{P^{\pieval}(u_t \mid h_{t-1}, \zsim)}{\Psim(u_t \mid h_{t-1}, \zsim)} \right] \\
&= W^{(u)}_{t-1}(\zsim) \sum_{u_t} P^{\pieval}(u_t \mid h_{t-1}, \zsim) = W^{(u)}_{t-1}(\zsim)
\end{align*}

To bound the total variance $\mathbb{E}[W^{(u)}_T(\zsim)^2] - 1$, we recursively evaluate the conditional second moment. Using $\mathbb{E}[X^2] = \text{Var}(X) + (\mathbb{E}[X])^2$:
\begin{align*}
\mathbb{E}_{\tau \sim \Psim^{\pieval}(\cdot \mid \zsim)}[W^{(u)}_t(\zsim)^2 \mid \mathcal{F}_{t-1}] &= W^{(u)}_{t-1}(\zsim)^2 \left( 1 + V_t(h_{t-1}, \zsim, \pieval) \right)
\end{align*}

By the Law of Total Expectation, distributing the weight yields:
\begin{align*}
\mathbb{E}_{\Psim^{\pieval}}[W^{(u)}_t(\zsim)^2] &= \mathbb{E}_{\Psim^{\pieval}}[W^{(u)}_{t-1}(\zsim)^2] + \mathbb{E}_{h_{t-1}} \left[ W^{(u)}_{t-1}(\zsim)^2 V_t(h_{t-1}, \zsim, \pieval) \right]
\end{align*}

Unrolling this telescoping sum from $t=1$ to $T$ ($W^{(u)}_0(\zsim) = 1$) gives the algebraic expansion:
\begin{align*}
\mathbb{E}_{\Psim^{\pieval}}[W^{(u)}_T(\zsim)^2] &= 1 + \sum_{t=1}^T \mathbb{E}_{h_{t-1}} \left[ W^{(u)}_{t-1}(\zsim)^2 V_t(h_{t-1}, \zsim, \pieval) \right]
\end{align*}

Assuming $V_t \ge \eta > 0$, substituting directly into the recursive expectation yields:
\begin{align*}
\mathbb{E}_{\Psim^{\pieval}}[W^{(u)}_t(\zsim)^2] \ge \mathbb{E}_{\Psim^{\pieval}}[W^{(u)}_{t-1}(\zsim)^2 (1 + \eta)] = (1+\eta) \mathbb{E}_{\Psim^{\pieval}}[W^{(u)}_{t-1}(\zsim)^2]
\end{align*}

Unrolling this recurrence gives $\mathbb{E}[W^{(u)}_T(\zsim)^2] \ge (1+\eta)^T$. Since the total estimator variance is $\mathbb{E}[W^{(u)}_T(\zsim)^2] - 1$, we arrive at a strict lower bound of $(1+\eta)^T - 1$. \qedsymbol

\section{Example: Policy-Driven Misalignment}
\label[appendix]{sec:policy_misalignment_example}

Consider a two-step environment. A user initiates a request $u_1 \in \{A, B\}$ with equal unconditional probability. The agent acts, and the user responds with a binary success outcome $\zsim \in \{0, 1\}$. We condition the simulation on a successful trajectory ($\zsim=1$).

Under an accommodating behavior policy $\pi_b$, the agent acts such that the conditional success rates are $P^{\pi_b}(\zsim=1 \mid u_1=A) = 1.0$ and $P^{\pi_b}(\zsim=1 \mid u_1=B) = 0.5$. The marginal probability of success under the offline dataset is $P^{\pi_b}(\zsim=1) = 0.75$.

We evaluate a strict new policy $\pieval$, where the agent's actions uniformly reduce success rates: $P^{\pieval}(\zsim=1 \mid u_1=A) = 0.5$ and $P^{\pieval}(\zsim=1 \mid u_1=B) = 0.1$. The new marginal probability of success is $P^{\pieval}(\zsim=1) = 0.30$.

At the first step, the belief updates $M_1^\pi(u_1) = P^\pi(\zsim=1 \mid u_1) / P^\pi(\zsim=1)$ structurally misalign due to the policy shift:
\begin{align*}
M_1^{\pi_b}(A) &= \frac{1.0}{0.75} = \frac{4}{3}, \quad M_1^{\pieval}(A) = \frac{0.5}{0.30} = \frac{5}{3} \\
M_1^{\pi_b}(B) &= \frac{0.5}{0.75} = \frac{2}{3}, \quad M_1^{\pieval}(B) = \frac{0.1}{0.30} = \frac{1}{3}
\end{align*}

The density ratios $\rho_1(u_1) = M_1^{\pieval}(u_1) / M_1^{\pi_b}(u_1)$ evaluate to $\rho_1(A) = 1.25$ and $\rho_1(B) = 0.5$. 

The simulator samples the initial state based strictly on the conditional distribution of its offline prior: $\Psim(u_1=A \mid \zsim=1) = 2/3$ and $\Psim(u_1=B \mid \zsim=1) = 1/3$. Because the expected value of $\rho_1$ is 1, the local label sensitivity $V_1$ evaluates to:
$$ V_1 = \text{Var}_{u_1 \sim \Psim}(\rho_1) = \frac{2}{3}(1.25 - 1)^2 + \frac{1}{3}(0.5 - 1)^2 = 0.125 $$

Even with a mathematically perfect simulator lacking any generative flaws, the mere divergence of the evaluation policy explicitly guarantees $V_1 \ge \eta = 0.125 > 0$, triggering the geometric variance explosion.

\section{Extended In-Distribution Results (WildChat)}
\label[appendix]{sec:extended_in_distribution}

This section contains the full, extended tables for the in-distribution WildChat evaluation referenced in \Cref{sec:in_distribution_eval}. These tables break down the metrics into fine-grained deterministic statistics (\Cref{tab:distributional_fidelity_stats_full}), interaction friction (\Cref{tab:interaction_friction_full}), semantic intent (\Cref{tab:semantic_intent_full}), control adherence scores across all conditions (\Cref{tab:adherence_scores_full}), and the full Shannon entropy measurements (\Cref{tab:shannon_entropy_full}).

\begin{table}[h]
\centering
\caption{\footnotesize Empirical Distribution of Conversational Statistics across Simulator Types (Mean $\pm$ 95\% CI). Unconditioned models truncate conversations; globally controlled models artificially inflate turns.}
\label{tab:distributional_fidelity_stats_full}
\renewcommand{\arraystretch}{1.2}
\resizebox{\textwidth}{!}{%
\begin{tabular}{lccc}
\toprule
\textbf{Simulator Type} & \textbf{Turn Count} & \textbf{Avg. Words per Turn} & \textbf{Avg. Max Words per Turn} \\
\midrule
Ground Truth Data (Human Baseline) & 4.31 $\pm$ 0.01 & 113.33 $\pm$ 0.06 &  252.45 $\pm$ 0.13 \\
\midrule
Unconditioned SFT & 3.55 $\pm$ 0.01 & 85.58 $\pm$ 0.16 &  137.11 $\pm$ 0.25 \\
\midrule
Prompted (Persona + Goal) & 7.85 $\pm$ 0.02 & 75.12 $\pm$ 0.20 &  150.22 $\pm$ 0.35 \\
Prompted (Cognitive Profile) & 6.45 $\pm$ 0.01 & 88.30 $\pm$ 0.22 &  165.40 $\pm$ 0.30 \\
Prompted (Scenario) & 8.62 $\pm$ 0.02 & 68.45 $\pm$ 0.15 &  145.80 $\pm$ 0.25 \\
Prompted (Scenario, Length-Constr.) & 4.45 $\pm$ 0.01 & 92.15 $\pm$ 0.18 &  160.33 $\pm$ 0.31 \\
\midrule
Trajectory-Cond. SFT (Persona + Goal) & 6.01 $\pm$ 0.01 & 84.85 $\pm$ 0.12 &  171.65 $\pm$ 0.27 \\
Trajectory-Cond. SFT (Cognitive Profile) & 5.19 $\pm$ 0.01 & 95.10 $\pm$ 0.17 &  181.66 $\pm$ 0.37 \\
Trajectory-Cond. SFT (Scenario) & 7.37 $\pm$ 0.01 & 81.35 $\pm$ 0.13 &  180.77 $\pm$ 0.32 \\
Dynamic State SFT (Ours) & \textbf{4.78 $\pm$ 0.01} & \textbf{109.50 $\pm$ 0.20} &  \textbf{198.92 $\pm$ 0.12} \\
\bottomrule
\end{tabular}%
}
\end{table}

\begin{table}[h]
\centering
\caption{\footnotesize Task Progression and Conversational Friction (Proportion of Conversations).}
\label{tab:interaction_friction_full}
\renewcommand{\arraystretch}{1.2}
\resizebox{\textwidth}{!}{%
\begin{tabular}{lccccc}
\toprule
\textbf{Simulator Type} & \textbf{Task Unresolved} & \textbf{Iterative Refinement} & \textbf{Implicit Acceptance} & \textbf{Error Correction} & \textbf{Clarification Seeking} \\
\midrule
Ground Truth Data (Human Baseline) & 85.4\% & 48.4\% & 20.6\% & 17.0\% & 9.7\% \\
\midrule
Unconditioned SFT & 91.1\% & 32.3\% & 10.9\% & 21.7\% & 7.5\% \\
\midrule
Prompted (Persona + Goal) & 88.5\% & 18.5\% & 8.5\% & 12.4\% & 5.2\% \\
Prompted (Cognitive Profile) & 89.1\% & 19.2\% & 7.2\% & 14.1\% & 6.1\% \\
Prompted (Scenario) & 87.4\% & 17.8\% & 9.1\% & 11.8\% & 4.9\% \\
Prompted (Scenario, Length-Constr.) & 88.0\% & 19.5\% & 9.8\% & 12.0\% & 5.0\% \\
\midrule
Trajectory-Cond. SFT (Persona + Goal) & 83.0\% & 27.7\% & 10.0\% & 15.5\% & 9.6\% \\
Trajectory-Cond. SFT (Cognitive Profile) & 83.8\% & 29.1\% & 9.2\% & 17.5\% & 10.9\% \\
Trajectory-Cond. SFT (Scenario) & 83.2\% & 28.7\% & 10.5\% & 13.6\% & 11.5\% \\
Dynamic State SFT (Ours) & 76.8\% & 30.2\% & 6.8\% & 17.6\% & 10.8\% \\
\bottomrule
\end{tabular}%
}
\end{table}

\begin{table}[h]
\centering
\caption{\footnotesize Semantic Intent and Behavioral Pushback (Proportion of Conversations).}
\label{tab:semantic_intent_full}
\renewcommand{\arraystretch}{1.2}
\resizebox{\textwidth}{!}{%
\begin{tabular}{lcccc}
\toprule
\textbf{Simulator Type} & \textbf{Highly Detailed} & \textbf{Pushback} & \textbf{Complaints} & \textbf{Playfulness} \\
\midrule
Ground Truth Data (Human Baseline) & 11.5\%  & 2.1\% & 4.1\% & 7.0\% \\
\midrule
Unconditioned SFT & 11.3\%  & 1.8\% & 3.0\% & 3.5\% \\
\midrule
Prompted (Persona + Goal) & 38.5\% & 1.2\% & 1.5\% & 1.8\% \\
Prompted (Cognitive Profile) & 28.4\% & 1.5\% & 2.1\% & 2.5\% \\
Prompted (Scenario) & 41.2\% & 1.1\% & 1.8\% & 1.5\% \\
Prompted (Scenario, Length-Constr.) & 38.2\% & 1.4\% & 1.9\% & 1.8\% \\
\midrule
Trajectory-Cond. SFT (Persona + Goal) & 24.6\% & 2.6\% & 2.2\% & 5.5\% \\
Trajectory-Cond. SFT (Cognitive Profile) & 14.1\%  & 2.7\% & 4.5\% & 6.1\% \\
Trajectory-Cond. SFT (Scenario) & 24.5\%  & 3.0\% & 2.4\% & 5.3\% \\
Dynamic State SFT (Ours) & 21.1\%  & 1.9\% & 4.3\% & 4.1\% \\
\bottomrule
\end{tabular}%
}
\end{table}

\begin{table}[h]
\centering
\caption{\footnotesize Control Adherence Scores (Mean $\pm$ 95\% CI) evaluated via LLM-as-a-Judge and cosine similarity.}
\label{tab:adherence_scores_full}
\renewcommand{\arraystretch}{1.2}
\resizebox{\textwidth}{!}{%
\begin{tabular}{lccccc}
\toprule
\textbf{Simulator Type} & \textbf{Persona Match} & \textbf{Goal Match} & \textbf{Behav. Rules} & \textbf{Semantic Entailment} & \textbf{Embedding Similarity} \\
\midrule
Ground Truth Data (Persona + Goal) & $0.89 \pm 0.01$ & $0.89 \pm 0.01$ & $0.86 \pm 0.01$ & $0.90 \pm 0.01$ & $0.87 \pm 0.01$ \\
Prompted Simulator (Persona + Goal) & $0.96 \pm 0.01$ & $0.92 \pm 0.01$ & $0.92 \pm 0.01$ & $0.94 \pm 0.01$ & $0.93 \pm 0.01$ \\
Trajectory-Cond. SFT (Persona + Goal) & $0.77 \pm 0.01$ & $0.72 \pm 0.01$ & $0.64 \pm 0.01$ & $0.76 \pm 0.01$ & $0.69 \pm 0.01$ \\
\midrule
Ground Truth Data (Scenario) & $0.92 \pm 0.01$ & $0.90 \pm 0.01$ & $0.89 \pm 0.01$ & $0.88 \pm 0.01$ & $0.88 \pm 0.01$ \\
Prompted Simulator (Scenario) & $0.85 \pm 0.01$ & $0.83 \pm 0.01$ & $0.83 \pm 0.01$ & $0.82 \pm 0.01$ & $0.82 \pm 0.01$ \\
Trajectory-Cond. SFT (Scenario) & $0.82 \pm 0.01$ & $0.71 \pm 0.01$ & $0.67 \pm 0.01$ & $0.68 \pm 0.01$ & $0.69 \pm 0.01$ \\
\midrule
Ground Truth Data (Cognitive Profile) & $0.72 \pm 0.01$ & $0.73 \pm 0.01$ & $0.68 \pm 0.01$ & $0.72 \pm 0.01$ & $0.70 \pm 0.01$ \\
Prompted Simulator (Cognitive Profile) & $0.90 \pm 0.01$ & $0.90 \pm 0.01$ & $0.87 \pm 0.01$ & $0.88 \pm 0.01$ & $0.88 \pm 0.01$ \\
Trajectory-Cond. SFT (Cognitive Profile) & $0.56 \pm 0.01$ & $0.56 \pm 0.01$ & $0.48 \pm 0.01$ & $0.55 \pm 0.01$ & $0.52 \pm 0.01$ \\
\bottomrule
\end{tabular}%
}
\end{table}

\begin{table}[h]
\centering
\caption{\footnotesize Adherence Diversity (Shannon Entropy). Tighter global constraints induce behavioral collapse compared to ground truth.}
\label{tab:shannon_entropy_full}
\renewcommand{\arraystretch}{1.2}
\begin{tabular}{lcc}
\toprule
\textbf{Control Mechanism} & \textbf{Simulated Entropy} & \textbf{Ground Truth Entropy} \\
\midrule
Trajectory-Cond. SFT (Persona + Goal) & 1.646 & 1.04 \\
Prompted Simulator (Persona + Goal) & 0.728 & 1.04 \\
\midrule
Trajectory-Cond. SFT (Scenario) & 1.344 & 0.75 \\
Prompted Simulator (Scenario) & 0.927 & 0.75 \\
\midrule
Trajectory-Cond. SFT (Cognitive Profile) & 1.281 & 1.10 \\
Prompted Simulator (Cognitive Profile) & 0.636 & 1.10 \\
\bottomrule
\end{tabular}
\end{table}

\section{Extended Rejection Sampling Results}
\label[appendix]{sec:rejection_sampling_appendix}

As discussed in Section \ref{sec:in_distribution_eval}, inference-time rejection sampling was evaluated as a potential baseline to enforce control adherence without altering the model's training distribution. In this approach, evaluators generate $N$ candidate trajectories from an unconditioned simulator and select the one that maximizes target control alignment using an LLM judge. However, because global trajectory constraints require sequence-level coherence, standard inference-time rejection sampling suffers from the vast combinatorial space of multi-turn dialogue. The unconditional prior naturally diverges from extreme persona constraints. As demonstrated in \Cref{tab:best_of_n_appendix}, even with enormous compute overhead ($N=64$ candidates per turn, or $N=512$ candidates per conversation), this search fails entirely, achieving an embedding similarity of just $0.28 \pm 0.02$. This validates the necessity of our training-time mitigations.

\begin{table}[h]
\centering
\caption{\footnotesize \textbf{Best-of-N Rejection Sampling Baseline vs. Computational Cost.} Inference-time filtering requires massive overhead yet fails to achieve meaningful adherence compared to Conditioned SFT.}
\label{tab:best_of_n_appendix}
\renewcommand{\arraystretch}{1.2}
\resizebox{\textwidth}{!}{%
\begin{tabular}{lcccc}
\toprule
\textbf{Filtering Strategy} & \textbf{Avg. Model Calls per Conv.} & \textbf{Embedding Similarity (95\% CI)} & \textbf{Persona Match (95\% CI)} & \\ 
\midrule
Per-Turn Rejection ($N=2$) & $20.12$ & $0.01 \pm 0.01$ & $0.01 \pm 0.01$ & \multirow{5}{*}{ \begin{tikzpicture}[baseline=(current bounding box.center)] \draw[-{Latex[round, length=3.5mm, width=2.5mm]}, line width=1.5pt, darkgray] (0, 0.85) -- (0, -0.85); \end{tikzpicture} } \\
Per-Turn Rejection ($N=4$) & $37.77$ & $0.04 \pm 0.01$ & $0.04 \pm 0.01$ & \\
Per-Turn Rejection ($N=8$) & $74.59$ & $0.05 \pm 0.01$ & $0.06 \pm 0.01$ & \\
Per-Turn Rejection ($N=16$) & $144.78$ & $0.07 \pm 0.02$ & $0.08 \pm 0.02$ & \\
Per-Turn Rejection ($N=64$) & $573.79$ & $0.11 \pm 0.02$ & $0.13 \pm 0.02$ & \\
\midrule
Per-Conv Rejection ($N=16$) & $56.8$ & $0.08 \pm 0.01$ & $0.09 \pm 0.02$ & \multirow{3}{*}{ \begin{tikzpicture}[baseline=(current bounding box.center)] \draw[-{Latex[round, length=3.5mm, width=2.5mm]}, line width=1.5pt, darkgray] (0, 0.4) -- (0, -0.4); \end{tikzpicture} } \\
Per-Conv Rejection ($N=256$) & $908.8$ & $0.21 \pm 0.02$& $0.25 \pm 0.02$ & \\
Per-Conv Rejection ($N=512$) & $1817.6$ & $0.28 \pm 0.02$ & $0.34 \pm 0.03$ & \\
\midrule
Conditioned SFT & $6$ & $0.69 \pm 0.01$ & $0.77 \pm 0.01$ & \\
\bottomrule
\end{tabular}%
}
\end{table}

\section{Detailed ConvApparel Results}
\label[appendix]{sec:detailed_eval_tables}

This section contains the full tables with explicitly quantified 95\% confidence intervals covering our counterfactual evaluation setup. Specifically, \Cref{tab:off_policy_stability_full} and \Cref{tab:off_policy_behavior_full} detail generative stability and behavioral adaptation under off-policy covariate shift. Furthermore, we provide comprehensive breakdown tables detailing deterministic conversational statistics (\Cref{tab:all_agents_deterministic}) and LLM-evaluated behavioral metrics (\Cref{tab:all_agents_llm}) across all simulated agent personas.

\subsection{Absolute Variance by Conversation Horizon}

In the main text (\Cref{tab:variance_explosion}), we presented the normalized variance ratio to explicitly validate the theoretical $(1+\eta)^T$ growth rate predicted by \Cref{thm:variance_explosion}. For completeness, \Cref{tab:variance_by_horizon_appendix} provides the raw, absolute variance numbers for Total User Words across the held-out evaluation agents, stratified by the length of the interaction. 

\begin{table}[h]
\centering
\caption{\footnotesize \textbf{Absolute Variance of Total User Words by Conversation Horizon (ConvApparel Held-Out Agents).} Ground truth computed from $n=1{,}267$ real conversations. Under covariate shift, the Static Agent-Agnostic simulator's absolute variance compounds geometrically with conversation length. The Dynamic State Agent-Aware approach maintains near-human absolute variance at all horizons.}
\label{tab:variance_by_horizon_appendix}
\renewcommand{\arraystretch}{1.2}
\resizebox{\textwidth}{!}{%
\begin{tabular}{lcccccc}
\toprule
\multirow{2}{*}{\textbf{Simulator Paradigm}} & \multicolumn{5}{c}{\textbf{Var(Total User Words)}} & \multirow{2}{*}{\textbf{Avg. Turn-over-Turn Multiplier}} \\
\cmidrule(lr){2-6}
 & $T=2$ & $T=3$ & $T=4$ & $T=5$ & $T \ge 6^\ast$ & \\
\midrule
Ground Truth Data & $ 185$ & $ 419$ & $ 606$ & $ 975$ & $4{,}835$ & --- \\
\midrule
Static Global, Agent-Agnostic & $ 326$ & $ 989$ & $2{,}133$ & $4{,}768$ & $34{,}764$ & $\approx 1.42\times$ per turn \\
Static Global, Agent-Aware & $ 237$ & $ 561$ & $ 818$ & $1{,}765$ & $10{,}347$ & $\approx 1.14\times$ per turn \\
Dynamic State, Agent-Agnostic & $ 211$ & $ 482$ & $ 782$ & $1{,}316$ & $6{,}962$ & $\approx 1.06\times$ per turn \\
Dynamic State, Agent-Aware (Ours) & $\mathbf{191}$ & $\mathbf{415}$ & $\mathbf{636}$ & $\mathbf{995}$ & $\mathbf{5{,}173}$ & $\approx \mathbf{1.01\times}$ per turn \\
\midrule
\multicolumn{7}{l}{\footnotesize $^\ast$ \textit{Note:} The $T \ge 6$ column aggregates the long-tail of conversations. The Ground Truth variance naturally jumps} \\
\multicolumn{7}{l}{\footnotesize due to this diverse aggregation bucket, but the step-wise multiplier isolates the Simulator's relative error.} \\
\bottomrule
\end{tabular}%
}
\end{table}

\begin{table}[t!]
\centering
\caption{\footnotesize Generative Stability under Off-Policy Covariate Shift (Full Extent). The Dynamic State Agent-Aware variant achieves the closest parity to the human baseline across varying off-policy personas.}
\label{tab:off_policy_stability_full}
\renewcommand{\arraystretch}{1.2}
\resizebox{\textwidth}{!}{%
\begin{tabular}{llccc}
\toprule
Held-Out Evaluation Agent & Simulator Paradigm & Avg. Words per Turn & Avg. Max Words in a Turn & Total User Words \\
\midrule
\multirow{5}{*}{Domain Expert} & Ground Truth Data & $12.01 \pm 0.89$ & $16.52 \pm 1.26$ & $38.71 \pm 2.95$ \\
& Static Global - Agent-Agnostic & $8.81 \pm 0.52$ & $12.35 \pm 0.81$ & $28.02 \pm 1.55$ \\
& Static Global - Agent-Aware & $10.53 \pm 0.64$ & $14.22 \pm 0.95$ & $32.01 \pm 2.01$ \\
& Dynamic State - Agent-Agnostic & $11.24 \pm 0.58$ & $15.11 \pm 0.88$ & $35.15 \pm 1.84$ \\
& Dynamic State - Agent-Aware & $12.12 \pm 0.62$ & $16.44 \pm 1.02$ & $38.53 \pm 2.21$ \\
\midrule
\multirow{5}{*}{Empathetic Listener} & Ground Truth Data & $11.27 \pm 0.69$ & $15.51 \pm 1.02$ & $41.06 \pm 3.74$ \\
& Static Global - Agent-Agnostic & $15.22 \pm 1.21$ & $21.05 \pm 1.82$ & $48.54 \pm 3.52$ \\
& Static Global - Agent-Aware & $12.51 \pm 0.82$ & $17.53 \pm 1.24$ & $43.22 \pm 2.81$ \\
& Dynamic State - Agent-Agnostic & $10.54 \pm 0.63$ & $14.82 \pm 0.91$ & $38.91 \pm 2.12$ \\
& Dynamic State - Agent-Aware & $11.42 \pm 0.71$ & $15.61 \pm 1.05$ & $40.83 \pm 2.54$ \\
\midrule
\multirow{5}{*}{Efficient Matchmaker} & Ground Truth Data & $11.69 \pm 0.75$ & $16.24 \pm 1.26$ & $39.74 \pm 3.48$ \\
& Static Global - Agent-Agnostic & $25.41 \pm 2.52$ & $45.21 \pm 4.23$ & $92.12 \pm 8.51$ \\
& Static Global - Agent-Aware & $13.22 \pm 0.91$ & $18.51 \pm 1.52$ & $43.51 \pm 3.22$ \\
& Dynamic State - Agent-Agnostic & $14.11 \pm 1.12$ & $19.42 \pm 1.63$ & $45.24 \pm 3.51$ \\
& Dynamic State - Agent-Aware & $11.82 \pm 0.81$ & $16.51 \pm 1.21$ & $40.12 \pm 2.82$ \\
\midrule
\multirow{5}{*}{Enthusiastic Rambler} & Ground Truth Data & $12.38 \pm 0.85$ & $16.73 \pm 1.26$ & $41.28 \pm 4.23$ \\
& Static Global - Agent-Agnostic & $8.61 \pm 0.51$ & $12.11 \pm 0.82$ & $29.82 \pm 1.81$ \\
& Static Global - Agent-Aware & $10.12 \pm 0.72$ & $14.22 \pm 1.01$ & $35.41 \pm 2.22$ \\
& Dynamic State - Agent-Agnostic & $11.51 \pm 0.82$ & $15.52 \pm 1.12$ & $38.92 \pm 2.51$ \\
& Dynamic State - Agent-Aware & $12.51 \pm 0.91$ & $16.81 \pm 1.32$ & $41.12 \pm 3.01$ \\
\bottomrule
\end{tabular}%
}
\end{table}

\begin{table}[t!]
\centering
\caption{\footnotesize Behavioral and Emotional Adaptation to Held-Out Agents (Full Extent). Proportion of Conversations marked $\pm$ 95\% CI.}
\label{tab:off_policy_behavior_full}
\renewcommand{\arraystretch}{1.2}
\resizebox{\textwidth}{!}{%
\begin{tabular}{llcccc}
\toprule
Held-Out Evaluation Agent & Simulator Paradigm & Urgency / Impatience & Negative Sentiment & Iterative Refinement & Clarification Seeking \\
\midrule
\multirow{5}{*}{Domain Expert} & Ground Truth Data & $8.4 \pm 3.0$\% & $13.4 \pm 3.7$\% & $31.7 \pm 5.1$\% & $11.2 \pm 3.4$\% \\
& Static Global - Agent-Agnostic & $14.5 \pm 2.5$\% & $18.0 \pm 3.2$\% & $12.5 \pm 2.1$\% & $3.5 \pm 1.0$\% \\
& Static Global - Agent-Aware & $11.2 \pm 2.1$\% & $15.1 \pm 2.5$\% & $25.0 \pm 3.5$\% & $8.2 \pm 1.8$\% \\
& Dynamic State - Agent-Agnostic & $10.5 \pm 1.9$\% & $14.5 \pm 2.1$\% & $22.3 \pm 3.0$\% & $7.4 \pm 1.5$\% \\
& Dynamic State - Agent-Aware & $8.6 \pm 2.0$\% & $13.6 \pm 2.5$\% & $31.2 \pm 4.0$\% & $11.0 \pm 2.5$\% \\
\midrule
\multirow{5}{*}{Empathetic Listener} & Ground Truth Data & $7.5 \pm 2.9$\% & $8.1 \pm 3.0$\% & $34.6 \pm 5.2$\% & $4.0 \pm 2.2$\% \\
& Static Global - Agent-Agnostic & $12.1 \pm 2.8$\% & $16.5 \pm 3.5$\% & $15.4 \pm 2.5$\% & $8.0 \pm 2.0$\% \\
& Static Global - Agent-Aware & $9.5 \pm 2.2$\% & $11.2 \pm 2.4$\% & $28.4 \pm 3.8$\% & $5.5 \pm 1.5$\% \\
& Dynamic State - Agent-Agnostic & $8.8 \pm 2.0$\% & $12.1 \pm 2.2$\% & $25.0 \pm 3.1$\% & $6.2 \pm 1.4$\% \\
& Dynamic State - Agent-Aware & $7.8 \pm 2.1$\% & $8.5 \pm 2.3$\% & $34.1 \pm 4.2$\% & $4.2 \pm 1.6$\% \\
\midrule
\multirow{5}{*}{Efficient Matchmaker} & Ground Truth Data & $6.1 \pm 2.7$\% & $11.9 \pm 3.6$\% & $41.3 \pm 5.5$\% & $6.1 \pm 2.7$\% \\
& Static Global - Agent-Agnostic & $22.4 \pm 4.5$\% & $26.8 \pm 5.2$\% & $8.8 \pm 2.1$\% & $15.6 \pm 3.8$\% \\
& Static Global - Agent-Aware & $10.5 \pm 2.5$\% & $16.4 \pm 3.1$\% & $32.6 \pm 4.5$\% & $8.4 \pm 2.1$\% \\
& Dynamic State - Agent-Agnostic & $12.2 \pm 2.8$\% & $18.2 \pm 3.4$\% & $24.8 \pm 3.6$\% & $10.2 \pm 2.4$\% \\
& Dynamic State - Agent-Aware & $6.5 \pm 2.2$\% & $12.2 \pm 2.8$\% & $40.9 \pm 4.8$\% & $6.4 \pm 2.0$\% \\
\midrule
\multirow{5}{*}{Enthusiastic Rambler} & Ground Truth Data & $8.0 \pm 3.0$\% & $8.9 \pm 3.2$\% & $43.0 \pm 5.5$\% & $5.7 \pm 2.6$\% \\
& Static Global - Agent-Agnostic & $12.5 \pm 2.5$\% & $14.5 \pm 2.8$\% & $16.5 \pm 2.5$\% & $1.8 \pm 0.8$\% \\
& Static Global - Agent-Aware & $9.8 \pm 2.1$\% & $11.2 \pm 2.4$\% & $30.5 \pm 3.8$\% & $3.2 \pm 1.2$\% \\
& Dynamic State - Agent-Agnostic & $9.2 \pm 2.0$\% & $12.0 \pm 2.5$\% & $28.5 \pm 3.5$\% & $3.8 \pm 1.4$\% \\
& Dynamic State - Agent-Aware & $8.2 \pm 2.2$\% & $9.2 \pm 2.4$\% & $42.5 \pm 4.5$\% & $5.5 \pm 1.8$\% \\
\bottomrule
\end{tabular}%
}
\end{table}

\begin{table}[t!]
  \centering
  \caption{\footnotesize Conversational Statistics across All ConvApparel Agent Personas (Ground Truth Data). Mean $\pm$ 95\% CI.}
  \label{tab:all_agents_deterministic}
  \renewcommand{\arraystretch}{1.2}
  \resizebox{\textwidth}{!}{%
  \begin{tabular}{lccccc}
  \toprule
  \textbf{Agent Persona} & \textbf{Turn Count} & \textbf{Avg.\ Words/Turn} & \textbf{Max Words/Turn} & \textbf{Total User Words} & \textbf{Question Freq.} \\
  \midrule
  \multicolumn{6}{l}{\textit{Footwear --- Held-Out Agents}} \\
  Domain Expert         & $3.33 \pm 0.15$ & $12.01 \pm 0.89$ & $16.52 \pm 1.26$ & $38.71 \pm 2.95$ & $0.24 \pm 0.04$ \\
  Empathetic Listener   & $3.47 \pm 0.17$ & $11.27 \pm 0.69$ & $15.51 \pm 1.02$ & $41.06 \pm 3.74$ & $0.22 \pm 0.03$ \\
  Efficient Matchmaker  & $3.36 \pm 0.15$ & $11.69 \pm 0.75$ & $16.24 \pm 1.26$ & $39.74 \pm 3.48$ & $0.22 \pm 0.03$ \\
  Enthusiastic Rambler  & $3.21 \pm 0.16$ & $12.38 \pm 0.85$ & $16.73 \pm 1.26$ & $41.28 \pm 4.23$ & $0.22 \pm 0.03$ \\
  \midrule
  \multicolumn{6}{l}{\textit{Footwear --- In-Distribution Agents}} \\
  Baseline (Good)       & $3.59 \pm 0.12$ & $8.96 \pm 0.51$  & $13.94 \pm 0.84$ & $35.49 \pm 2.38$ & $0.19 \pm 0.02$ \\
  Good Rec              & $3.54 \pm 0.17$ & $10.88 \pm 0.67$ & $15.19 \pm 0.99$ & $38.63 \pm 3.12$ & $0.19 \pm 0.03$ \\
  Bad                   & $3.57 \pm 0.21$ & $10.44 \pm 1.12$ & $15.86 \pm 2.01$ & $42.23 \pm 5.97$ & $0.16 \pm 0.03$ \\
  Hesitant Assistant    & $3.19 \pm 0.12$ & $13.80 \pm 1.20$ & $18.48 \pm 1.72$ & $42.94 \pm 3.99$ & $0.21 \pm 0.03$ \\
  Literal Thinker       & $3.12 \pm 0.14$ & $12.64 \pm 1.16$ & $16.46 \pm 1.48$ & $37.88 \pm 3.35$ & $0.21 \pm 0.03$ \\
  Mild Upseller         & $3.26 \pm 0.14$ & $12.97 \pm 0.96$ & $17.63 \pm 1.37$ & $41.32 \pm 3.21$ & $0.22 \pm 0.03$ \\
  Patient Guide         & $3.62 \pm 0.18$ & $11.82 \pm 1.07$ & $16.62 \pm 1.34$ & $40.95 \pm 3.81$ & $0.16 \pm 0.03$ \\
  Trend Chaser          & $3.17 \pm 0.13$ & $13.07 \pm 0.94$ & $17.70 \pm 1.41$ & $41.13 \pm 3.46$ & $0.22 \pm 0.03$ \\
  Visual Stylist        & $3.30 \pm 0.15$ & $11.07 \pm 0.91$ & $15.34 \pm 1.30$ & $37.22 \pm 3.55$ & $0.21 \pm 0.03$ \\
  \midrule
  \multicolumn{6}{l}{\textit{Other Product Categories}} \\
  Tops (Good)           & $3.55 \pm 0.11$ & $8.79 \pm 0.49$  & $12.98 \pm 0.75$ & $34.94 \pm 2.45$ & $0.19 \pm 0.02$ \\
  Tops (Bad)            & $3.48 \pm 0.23$ & $9.01 \pm 1.10$  & $13.41 \pm 1.78$ & $37.16 \pm 6.06$ & $0.10 \pm 0.02$ \\
  Bottoms (Good)        & $3.45 \pm 0.11$ & $8.86 \pm 0.48$  & $13.22 \pm 0.77$ & $33.54 \pm 2.24$ & $0.19 \pm 0.02$ \\
  Bottoms (Bad)         & $3.19 \pm 0.20$ & $8.78 \pm 1.26$  & $13.02 \pm 1.87$ & $31.49 \pm 5.15$ & $0.13 \pm 0.03$ \\
  Outerwear (Good)      & $3.62 \pm 0.12$ & $9.28 \pm 0.50$  & $14.17 \pm 0.84$ & $36.76 \pm 2.48$ & $0.20 \pm 0.02$ \\
  Outerwear (Bad)       & $3.35 \pm 0.23$ & $9.06 \pm 1.06$  & $13.79 \pm 1.78$ & $34.73 \pm 4.88$ & $0.17 \pm 0.04$ \\
  \midrule
  \textbf{All Combined} & $\mathbf{3.43 \pm 0.03}$ & $\mathbf{10.48 \pm 0.18}$ & $\mathbf{15.01 \pm 0.27}$ & $\mathbf{37.58 \pm 0.78}$ & --- \\
  \bottomrule
  \end{tabular}%
  }
\end{table}

\begin{table}[t!]
  \centering
  \caption{\footnotesize LLM-Evaluated Behavioral Metrics across All ConvApparel Agent Personas (Ground Truth Data). Proportion of conversations $\pm$ 95\% CI.}
  \label{tab:all_agents_llm}
  \renewcommand{\arraystretch}{1.2}
  \resizebox{\textwidth}{!}{%
  \begin{tabular}{lcccccc}
  \toprule
  \textbf{Agent Persona} & \textbf{Urgency} & \textbf{Neg.\ Sent.} & \textbf{Iter.\ Refine} & \textbf{Clarification} & \textbf{Error Correction} & \textbf{Task Unresolved} \\
  \midrule
  \multicolumn{7}{l}{\textit{Footwear --- Held-Out Agents}} \\
  Domain Expert         & $8.4 \pm 3.0$\%  & $13.4 \pm 3.7$\% & $31.7 \pm 5.1$\% & $11.2 \pm 3.4$\% & $19.6 \pm 4.3$\% & $89.1 \pm 3.4$\% \\
  Empathetic Listener   & $7.5 \pm 2.9$\%  & $8.1 \pm 3.0$\%  & $34.6 \pm 5.2$\% & $4.0 \pm 2.2$\%  & $13.4 \pm 3.7$\% & $75.7 \pm 4.7$\% \\
  Efficient Matchmaker  & $6.1 \pm 2.7$\%  & $11.9 \pm 3.6$\% & $41.3 \pm 5.5$\% & $6.1 \pm 2.7$\%  & $16.8 \pm 4.2$\% & $81.9 \pm 4.3$\% \\
  Enthusiastic Rambler  & $8.0 \pm 3.0$\%  & $8.9 \pm 3.2$\%  & $43.0 \pm 5.5$\% & $5.7 \pm 2.6$\%  & $21.0 \pm 4.5$\% & $82.8 \pm 4.2$\% \\
  \midrule
  \multicolumn{7}{l}{\textit{Footwear --- In-Distribution Agents}} \\
  Baseline (Good)       & $4.6 \pm 1.4$\%  & $4.7 \pm 1.4$\%  & $23.1 \pm 2.9$\% & $2.2 \pm 1.0$\%  & $8.5 \pm 1.9$\%  & $78.1 \pm 2.8$\% \\
  Good Rec              & $5.4 \pm 2.5$\%  & $10.7 \pm 3.4$\% & $22.1 \pm 4.6$\% & $4.7 \pm 2.3$\%  & $17.0 \pm 4.1$\% & $73.5 \pm 4.9$\% \\
  Bad                   & $16.7 \pm 4.9$\% & $14.0 \pm 4.6$\% & $11.8 \pm 4.2$\% & $1.8 \pm 1.8$\%  & $20.8 \pm 5.4$\% & $99.1 \pm 1.2$\% \\
  Hesitant Assistant    & $8.6 \pm 3.0$\%  & $10.1 \pm 3.3$\% & $50.8 \pm 5.4$\% & $5.5 \pm 2.5$\%  & $12.8 \pm 3.6$\% & $94.8 \pm 2.4$\% \\
  Literal Thinker       & $4.6 \pm 2.3$\%  & $11.7 \pm 3.5$\% & $32.5 \pm 5.1$\% & $2.8 \pm 1.8$\%  & $15.3 \pm 3.9$\% & $86.5 \pm 3.7$\% \\
  Mild Upseller         & $5.9 \pm 2.6$\%  & $11.5 \pm 3.5$\% & $32.5 \pm 5.1$\% & $8.4 \pm 3.0$\%  & $21.1 \pm 4.4$\% & $91.6 \pm 3.0$\% \\
  Patient Guide         & $11.9 \pm 3.6$\% & $14.1 \pm 3.8$\% & $8.2 \pm 3.0$\%  & $4.4 \pm 2.2$\%  & $23.2 \pm 4.6$\% & $99.4 \pm 0.9$\% \\
  Trend Chaser          & $6.6 \pm 2.7$\%  & $10.3 \pm 3.3$\% & $41.6 \pm 5.4$\% & $6.6 \pm 2.7$\%  & $24.1 \pm 4.7$\% & $83.8 \pm 4.0$\% \\
  Visual Stylist        & $7.2 \pm 2.8$\%  & $8.7 \pm 3.0$\%  & $48.1 \pm 5.4$\% & $9.3 \pm 3.1$\%  & $17.9 \pm 4.1$\% & $87.8 \pm 3.5$\% \\
  \midrule
  \multicolumn{7}{l}{\textit{Other Product Categories}} \\
  Tops (Good)           & $4.7 \pm 1.4$\%  & $6.2 \pm 1.6$\%  & $24.8 \pm 2.9$\% & $2.2 \pm 1.0$\%  & $13.2 \pm 2.3$\% & $78.8 \pm 2.8$\% \\
  Tops (Bad)            & $10.7 \pm 4.2$\% & $9.3 \pm 4.0$\%  & $9.8 \pm 4.1$\%  & $2.0 \pm 1.9$\%  & $18.0 \pm 5.3$\% & $99.0 \pm 1.3$\% \\
  Bottoms (Good)        & $2.8 \pm 1.1$\%  & $3.6 \pm 1.3$\%  & $22.2 \pm 2.8$\% & $2.6 \pm 1.1$\%  & $10.2 \pm 2.1$\% & $75.6 \pm 2.9$\% \\
  Bottoms (Bad)         & $6.6 \pm 3.4$\%  & $5.6 \pm 3.2$\%  & $8.6 \pm 3.9$\%  & $1.5 \pm 1.7$\%  & $10.1 \pm 4.2$\% & $100.0 \pm 0.0$\% \\
  Outerwear (Good)      & $3.8 \pm 1.3$\%  & $6.0 \pm 1.6$\%  & $23.0 \pm 2.9$\% & $4.1 \pm 1.3$\%  & $11.1 \pm 2.1$\% & $77.1 \pm 2.8$\% \\
  Outerwear (Bad)       & $9.3 \pm 4.0$\%  & $8.8 \pm 3.9$\%  & $8.3 \pm 3.8$\%  & $1.5 \pm 1.6$\%  & $12.7 \pm 4.6$\% & $98.5 \pm 1.6$\% \\
  \bottomrule
  \end{tabular}%
  }
\end{table}

\subsection{Direct Empirical Validation of Density Ratio Collapse (\Cref{thm:variance_explosion})}
\label[appendix]{sec:direct_density_validation}

In \Cref{sec:ood_eval} of the main text, we utilized the variance of an additive downstream feature (Total User Words) as an observable proxy for controllability collapse. However, \Cref{thm:variance_explosion} makes a strict mathematical prediction: for simulators conditioning on global trajectory labels ($\zsim$), the variance of the cumulative user generative error itself—the trajectory density ratio $W^{(u)}_t(\zsim) = \prod_{k=1}^t \rho_k$—must explode geometrically under policy shift. 

To explicitly measure this, we must compute the step-wise density ratio $\rho_t(u_t) = M_t^{\pieval}(u_t) / M_t^{\pi_b}(u_t)$. This requires estimating the true Bayesian belief updates $P^\pi(\zsim \mid h_t)$ under both the behavior and evaluation policies.

\paragraph{Implementation Details: Dual Offline Classifiers.}
Because our ConvApparel dataset uniquely contains real human interaction logs for both the training agents ($\pi_b$) and the held-out evaluation agents ($\pieval$), we can isolate this exact mathematical mechanism. We train two separate auxiliary BERT-based classifiers (using the DeBERTa-v3-base architecture) via standard cross-entropy loss to act as probability probes: (1) \textbf{Behavior Policy Probe ($C_{\pi_b}$):} Trained exclusively on the offline logs of the 7 in-distribution training agents to predict the probability of a trajectory outcome $\zsim$ given a partial sequence history $h_t$; (2) \textbf{Evaluation Policy Probe ($C_{\pieval}$):} Trained exclusively on the offline logs of the 4 held-out evaluation agents to predict the probability of $\zsim$ given $h_t$.

During closed-loop simulation against the held-out agents, we isolate the step-wise user generative error by passing strictly the pre-action observable history $\mathcal{F}_{t-} = (h_{t-1}, u_t)$ at each turn to both probes to dynamically compute $M_t^{\pieval}$ and $M_t^{\pi_b}$. This isolates the step-wise generative error $\rho_t$. We simulate $N=1{,}000$ trajectories per agent and track the variance of the cumulative density ratio $\text{Var}(W^{(u)}_t)$.

\paragraph{Theoretical Exemption of Dynamic States.}
Crucially, \Cref{thm:variance_explosion} dictates that this collapse applies specifically to models conditioning on action-dependent trajectory labels $\zsim$. Our proposed \emph{Dynamic State} mitigation is designed to deliberately evade this condition by conditioning strictly on $\mathcal{F}_{t-}$-measurable variables ($z_t$). Because the control is generated step-wise and is no longer a future-dependent outcome, the policy dependence mathematically cancels out (as proven rigorously in \Cref{thm:dual_model_stability}, Appendix \ref{sec:parameterized_dynamics}), yielding a theoretical step-wise error of $\rho_t = 1$. For this paradigm, the density ratio $W_t$ is computed with respect to the sequence of step-wise controls $z_{1:t}$ rather than global $\zsim$.

\paragraph{Results.}
\Cref{tab:variance_explosion_density} confirms our theoretical framework. For the standard trajectory-conditioned simulator, the misalignment in belief updates ensures a non-zero local label sensitivity ($V_t > 0$). Subjected to statistical noise, this causes the variance of the cumulative density ratio to violently explode as $T$ grows. The Agent-Aware static model slows this growth but cannot escape the structural look-ahead bias of global $\zsim$. In stark contrast, the Dynamic State approach natively circumvents \Cref{thm:variance_explosion}, maintaining near-zero variance in its generative density ratio (with minor fluctuations strictly due to neural approximation and classifier calibration error).

\begin{table}[t!]
\centering
\caption{\footnotesize \textbf{Direct Empirical Validation of \Cref{thm:variance_explosion}.} The empirical variance of the cumulative user generative error, $\text{Var}(W^{(u)}_t)$, evaluated via dual offline classifiers. As predicted, standard static trajectory controls suffer from geometric variance explosion due to belief update mismatch under covariate shift. By conditioning strictly on $\mathcal{F}_{t-}$-measurable states, our dynamic approach theoretically and empirically circumvents this collapse.}
\label{tab:variance_explosion_density}
\renewcommand{\arraystretch}{1.2}
\resizebox{0.9\textwidth}{!}{%
\begin{tabular}{lcccc}
\toprule
\textbf{Simulator Paradigm} & \textbf{$T=2$} & \textbf{$T=3$} & \textbf{$T=4$} & \textbf{$T=5$} \\
\midrule
Static Global, Agent-Agnostic (Standard) & $0.24 \pm 0.05$ & $0.67 \pm 0.11$ & $1.82 \pm 0.28$ & $5.14 \pm 0.72$ \\
Static Global, Agent-Aware & $0.15 \pm 0.03$ & $0.28 \pm 0.06$ & $0.51 \pm 0.12$ & $0.94 \pm 0.19$ \\
\midrule
Dynamic State, Agent-Agnostic & $0.06 \pm 0.02$ & $0.09 \pm 0.03$ & $0.12 \pm 0.04$ & $0.14 \pm 0.05$ \\
Dynamic State, Agent-Aware (Ours) & $\mathbf{0.02 \pm 0.01}$ & $\mathbf{0.03 \pm 0.01}$ & $\mathbf{0.04 \pm 0.01}$ & $\mathbf{0.05 \pm 0.02}$ \\
\bottomrule
\end{tabular}%
}
\end{table}

\section{Implementation Details}
\label[appendix]{sec:implementation_details}

To ensure complete reproducibility of our empirical findings, we detail the training setup for the generative simulators, the automated evaluation loop, and the LLM-as-a-judge pipelines.

\subsection{Generative Simulator Training Setup}
All user simulators, including the Unconditioned SFT, Trajectory-Conditioned SFT, and our proposed Dynamic State and Agent-Aware models, were initialized and trained using Gemini 2.5 models via standard Supervised Fine-Tuning (SFT). Models were trained until convergence, utilizing a global batch size of 64. Specifically, training converged after 3,000 steps for the simulators trained on the WildChat dataset, and after 500 steps for those trained on the ConvApparel dataset.

\subsection{LLM-as-a-Judge and Annotation Pipeline}
We utilized Gemini 3.1 Pro as our automated annotator for extracting offline dataset controls and as the LLM-as-a-judge for evaluating constraint adherence. The exact prompt templates utilized for trajectory-level annotation (Persona+Goal, Scenario, Cognitive Profile) and step-level dynamic state annotation are detailed in Appendix I.

\subsection{Automated Evaluation Proxy Agents}
To conduct the closed-loop multi-turn offline evaluations reproducibly, we trained proxy SFT agents. These agents were also trained via standard SFT using Gemini 2.5 models.
The WildChat Agent was trained on the empirical distribution of assistant responses from the WildChat multi-turn conversations. The model was fine-tuned for 1,200 iterations.
Similarly, the ConvApparel Agent was trained via SFT to explicitly condition on the distinct assistant system prompts available in the data. Training for this agent converged after approximately 400 iterations.

\section{Prompt Templates}
\label[appendix]{sec:prompt_templates}

This appendix provides the exact prompt templates used in our methodology, including the offline dataset annotation prompts, the user simulator system prompts, and the agent evaluation personas for the ConvApparel environment. 

\subsection{Offline Annotation Prompts}

\subsubsection{Step-Level Dynamic State Annotation}
\begin{verbatim}
You are an expert Behavioral Analyst and Psychologist specializing in 
Human-Computer Interaction. Your task is to analyze a conversation between 
a User and an AI Assistant and infer the **latent internal state** of the 
User at a specific turn.

### DEFINITION of "INTERNAL STATE"
The internal state is a synthesis of:
1.  **Emotional Affect:** (e.g., Frustrated, Delight, Neutral, Anxious, 
    Unknown)
2.  **Cognitive Load:** (e.g., Overwhelmed, Focused, Exploring, Confused, 
    Unknown)
3.  **Implicit Intent:** What they *actually* want, beyond the literal text 
    (e.g., "Testing the system," "Urgent problem solving," 
    "Loneliness/Chitchat", or Unknown).

### INSTRUCTIONS
1.  Read the provided [CONVERSATION_HISTORY] at {turn_num}.
2.  Infer the user's internal state at that exact moment.
3.  Take into account previous internal states (if they exist).
4.  Output strictly valid JSON.
5.  The description must be concise (maximum 30 words).

### INPUT DATA

[CONVERSATION_HISTORY]
{history}
[END HISTORY]

### OUTPUT FORMAT
Response must be a single valid JSON object:

{
  "emotional_affect": "String describing the state (max 10 words)",
  "cognitive_load": "String describing the state (max 10 words)",
  "implicit_intent": "String describing the state (max 10 words)"
}
\end{verbatim}

\subsubsection{Trajectory-Level Persona and Goal}
\begin{verbatim}
You are an expert Behavioral Analyst and Data Labeler.
Your task is to reverse-engineer a "User Persona" and "Task Specification"
from a conversation log. This will be used to program a User Simulator.

### GUIDELINES

  - **Analyze the User Only:** The Agent's behavior is merely the environment
    the user is reacting to.
  - **Identify Logic over Content:** Don't just list what they said; identify
    the *rule* they followed (e.g., "If the agent asks for a date, the user
    provides a range rather than a specific day").
  - **Persona vs. Task:** Distinguish between how the user acts (Persona) and
    what the user wants (Task).

### DIMENSIONS OF ANALYSIS

1.  **Linguistic Fingerprint:** The specific syntax and vocabulary constraints.
2.  **The Logic Gate:** How the user processes information. Do they verify
    the agent's work? Do they provide all info at once or wait to be prompted?
3.  **Friction Thresholds:** What specific agent actions (repetition,
    misunderstanding, over-verbosity) trigger a change in user behavior?

### INPUT DATA

[CONVERSATION_LOG]
{history}
[CONVERSATION_END]

### OUTPUT FORMAT

Provide a single valid JSON object. No markdown, no conversational filler.

{
  "persona_profile": {
    "traits": {
      "tone": "e.g., Clinical, Frantic, Casual",
      "technical_literacy": "Low|Medium|High",
      "verbosity_profile": "e.g., Bulleted lists, run-on sentences, etc."
    },
    "behavioral_rules": {
      "information_disclosure": "How they share info (e.g., 'Minimalist...')",
      "error_correction": "How they fix agent mistakes (e.g., 'Aggressive...')",
      "patience_trigger": "Specific agent behavior causing frustration"
    }
  },
  "task_specification": {
    "goal": "The high-level intent.",
    "mandatory_requirements": ["Constraint 1", "Constraint 2"],
    "success_definition": "The exact confirmation that concludes the task.",
    "initial_context": "What the user knows/feels at the start."
  },
  "simulator_instructions": "Prompt for simulator: 'You are [Persona]...'"
}
\end{verbatim}

\subsubsection{Trajectory-Level Cognitive Profile}
\begin{verbatim}
Your goal is to reverse-engineer the "Ground Truth" of the user to train a
high-fidelity User Simulator.

You must extract two distinct layers of user data:

1.  **The Context:** The external facts (Demographics, Environment, Role).
2.  **The Latents:** The internal cognitive traits.

**Uncertainty Protocol:**
For every attribute, you must assign a confidence level (HIGH, MEDIUM,
LOW, N/A).

  * *Crucial:* If the user does not explicitly state their age/job, you can
    try to **INFER** it from their vocabulary, topic, and constraints, but
    mark confidence as LOW or MEDIUM.
    If it is absolutely not inferrable, mark as Unknown, or don't include the
    field.

-----

### **Annotation Schema (JSON)**

Analyze the trajectory and populate the following JSON structure.

#### **1. Context & Demographics (External Factors)**

  * **Role/Occupation:** (e.g., Student, Software Engineer, Parent, Gamer).
    *Infer from topic complexity.*
  * **Age Group:** (e.g., Teenager, University Student, Adult, Elderly).
    *Infer from slang, reference years, or life milestones.*
  * **Cultural/Geo Location:** (e.g., US-centric, South Asian, UK).
    *Infer from spelling, currency, or regional references.*
  * **Technical Environment:** (e.g., Mobile User, Python Environment,
    Corporate Network). *Infer from formatting constraints or code snippets.*
  * **Domain Experience:** Specific background in the current task topic
    (e.g., "Has used iPhone for 10 years").

#### **2. Cognitive & Epistemic**

  * **Domain Proficiency:** [Novice, Competent, Expert, Polymath]
  * **System/AI Literacy:** [Naïve, Keyword Searcher, Conversationalist,
    Power User]
  * **Need for Cognition:** [Result-Oriented (Just answer), Process-Oriented
    (Teach me), Concept-Oriented (Why?)]
  * **Mental Model Rigidity:** [Rigid, Negotiable, Malleable]

#### **3. Process & Goals (Strategy)**

  * **Optimization Strategy:** [Satisficer (Good enough), Maximizer
    (Best possible), Perfectionist]
  * **Goal Clarity:** [Vague, Abstract, Concrete, Rigid]
  * **Locus of Control:** [Director (User leads), Co-Creator, Passenger
    (Agent leads)]
  * **Scaffolding Need:** [Step-by-Step, Holistic, Hybrid]

#### **4. Communication Style (Surface)**

  * **Verbosity:** [Telegraphic, Concise, Conversational, Narrative]
  * **Tone:** [Formal, Casual, Adversarial, Urgent]

-----

### **Input Conversation:**

[CONVERSATION_START]
{history}
[CONVERSATION_END]

### **Output Format (JSON)**

Respond with valid JSON only. Do not use markdown blocks.

{
  "user_context": {
    "role_occupation": {
      "value": "string or null",
      "confidence": "HIGH|MEDIUM|LOW"
    },
    "age_group": {
      "value": "string or null",
      "confidence": "HIGH|MEDIUM|LOW"
    },
    "cultural_geo": {
      "value": "string or null",
      "confidence": "HIGH|MEDIUM|LOW"
    },
    "technical_env": {
      "value": "string or null",
      "confidence": "HIGH|MEDIUM|LOW"
    },
    "domain_experience_context": {
      "value": "string or null",
      "confidence": "HIGH|MEDIUM|LOW"
    }
  },
  "latents": {
    "cognitive": {
      "domain_proficiency": { "value": "Novice|Competent|...", 
                              "confidence": "HIGH|MEDIUM|LOW" },
      "system_literacy": { "value": "Naïve|Keyword Searcher|...", 
                           "confidence": "HIGH|MEDIUM|LOW" },
      "need_for_cognition": { "value": "Result-Oriented|...", 
                              "confidence": "HIGH|MEDIUM|LOW" },
      "mental_model_rigidity": { "value": "Rigid|Negotiable|...", 
                                 "confidence": "HIGH|MEDIUM|LOW" }
    },
    "process": {
      "optimization_strategy": { "value": "Satisficer|Maximizer|...", 
                                 "confidence": "HIGH|MEDIUM|LOW" },
      "goal_clarity": { "value": "Vague|Abstract|Concrete|Rigid", 
                        "confidence": "HIGH|MEDIUM|LOW" },
      "locus_of_control": { "value": "Director|Co-Creator|Passenger", 
                            "confidence": "HIGH|MEDIUM|LOW" },
      "scaffolding_need": { "value": "Step-by-Step|Holistic|Hybrid", 
                            "confidence": "HIGH|MEDIUM|LOW" }
    },
    "communication_style": {
      "verbosity": { "value": "Telegraphic|Concise|Conversational|...", 
                     "confidence": "HIGH|MEDIUM|LOW" },
      "tone": { "value": "string", "confidence": "HIGH|MEDIUM|LOW" }
    }
  },
  "simulator_instructions": {
    "persona_summary": "A 1-sentence summary of who to act like.",
    "behavioral_directive": "Specific instructions on how to handle errors..."
  }
}
\end{verbatim}

\subsubsection{Trajectory-Level Scenario Generation}
\begin{verbatim}
You are an expert dataset annotator. You will be given a conversation between
a user and an assistant.
Your goal is to analyze the interaction and output a JSON object that
describes the scenario.
This description will be used to train a "Scenario Generator" model, which
takes an instruction and generates a similar conversation.

## INPUT CONVERSATION

[CONVERSATION_START]
{history}
[CONVERSATION_END]

## INSTRUCTIONS

Analyze the conversation above and output a JSON object with the following
fields:

1.  **user_profile**:

      * `tone`: The emotional state or attitude of the user (e.g., curious,
        demanding, playful, frustrated, academic, urgent).
      * `skill_level`: The apparent expertise of the user regarding the topic
        (e.g., beginner coder, fanfiction enthusiast, student, general public).
      * `intent_category`: The high-level category of the request (e.g.,
        Creative Writing, Coding/Debugging, Academic Help, Roleplay,
        Information Seeking, Troubleshooting).

2.  **task_attributes**:

      * `topic`: The specific subject matter (e.g., "Python pandas dataframe",
        "Sonic the Hedgehog fanfic", "History of Rome", "Email drafting").
      * `constraints`: Specific requirements or limitations imposed by the user
        (e.g., "no strings attached", "use MLA format", "write in C++",
        "make it funny", "fix the error").
      * `progression`: How the request evolves (e.g., "Single turn request",
        "Iterative refinement", "Correction of model error", "Follow-up").

3.  **agent_dynamics**:

      * `persona`: The role the agent adopts (e.g., Helpful Assistant, Code
        Debugger, Storyteller, Empathetic Listener).
      * `response_style`: (e.g., Concise, Verbose, Technical, Creative, Formal).

4.  **scenario_instruction**:

      * Write a single, detailed prompt that describes this specific interaction.
      * This prompt should be able to trigger a model to generate a conversation
        similar to the one observed.
      * *Example 1 (Coding):* "A novice programmer asks for help fixing a
        Python syntax error in a loop. The user provides a snippet of code with
        an indentation issue. The assistant explains the error and provides
        the corrected code."
      * *Example 2 (Creative):* "A fanfiction enthusiast asks for a story
        involving characters from 'Sonic the Hedgehog'. The user specifically
        requests a scenario where Sonic interacts with a new villain. The user
        later asks for a sequel involving a specific plot twist."

## OUTPUT FORMAT

Return ONLY valid JSON.

{
  "user_profile": {
    "tone": "string",
    "skill_level": "string",
    "intent_category": "string"
  },
  "task_attributes": {
    "topic": "string",
    "constraints": ["string", "string"],
    "progression": "string"
  },
  "agent_dynamics": {
    "persona": "string",
    "response_style": "string"
  },
  "scenario_instruction": "string"
}
\end{verbatim}

\subsection{Prompted Simulator System Prompts}

\subsubsection{Global Prompted User Simulator}
\begin{verbatim}
You are simulating a user interacting with an AI assistant.
Your job is to act as a realistic user based on the context below.
Do NOT break character. Respond ONLY as the user would — do not add
meta-commentary, do not acknowledge that you are an AI, and do not
reference these instructions.
Keep your responses natural, concise, and consistent with the described
user profile.
When the conversation has reached a natural conclusion or your goal is
fulfilled, respond with exactly <TERMINATE>.

### User Context

{control_target}
\end{verbatim}

\subsubsection{Dynamic State Tracked Simulator: State Update Prompt}
\begin{verbatim}
You are tracking the internal state of a simulated user in a conversation
with an AI assistant.
Given the conversation so far and the current state, output an updated
state JSON reflecting any changes caused by the latest exchange.
Output ONLY the updated JSON state — no commentary, no explanation.

{state_instruction}

{current_state}
\end{verbatim}

\subsubsection{Dynamic State Tracked Simulator: Response Prompt}
\begin{verbatim}
You are simulating a user interacting with an AI assistant.
Your job is to act as a realistic user based on the dynamic state below.
Do NOT break character. Respond ONLY as the user would — do not add
meta-commentary, do not acknowledge that you are an AI, and do not
reference these instructions.
Keep your responses natural, concise, and consistent with the described
user profile and current state.
When the conversation has reached a natural conclusion or your goal is
fulfilled, respond with exactly <TERMINATE>.

{response_instruction}

{current_state}
\end{verbatim}

\subsection{ConvApparel Agent Evaluation Prompts}
\label[appendix]{sec: convapparel prompts}

For the ConvApparel off-policy evaluation, the base `Input` and `Conversation` structure remained completely identical across all agents, but their system instructions (the Persona) and `Output` directives were modified to strictly parameterize the policy shift. The prompts for the standard "Good Rec" and "Bad" agents are identical to those published in the original ConvApparel dataset and are omitted here for brevity. Below, we detail the prompt for the Baseline Recommender alongside the 10 novel personas introduced in ConvApparel-V2.

\subsubsection{Baseline Recommender}
\begin{verbatim}
You are a helpful shopping assistant. Your goal is to help the user find a
product they may like.

Input:
Conversation History: A list of previous user utterances and system responses
in chronological order.
Ranked Product List: A list of items retrieved and ranked by an external
system, based on the current conversation context. Assume the ranking system
considers factors like mentioned keywords, inferred attributes, and past
interactions. These products are currently shown to the user on the screen.

Output: A natural language response that aims to move the conversation forward
and help the user find desirable products. Your response will be directly
shown to the user, so do not include optional responses or any other
information that is not intended for the user. Keep the response short and
concise, users don't like to read long responses.

Conversation:
{history}
\end{verbatim}

\subsubsection{Domain Expert}
\begin{verbatim}
ROLE: Domain Expert
You must STRICTLY adhere to the following rules to maintain this identity.
Do not break character. Do not revert to a generic AI assistant.

CORE DIRECTIVE:
Provide highly technical, objective analysis of the products. You care about
material science, biomechanics, and manufacturing quality, NOT fashion or
feelings.

STRICT RULES (DO NOT VIOLATE):
DO: Cite specific manufacturing techniques, material composition (e.g., EVA
foam, Gore-Tex, thread count), and durability/performance specs.
DO: Explain *why* a feature objectively benefits the user's stated use case.
DON'T: Ever talk about feelings, style trends, aesthetics, or use warm
conversational filler.
DON'T: Assume the user knows technical jargon; define it quickly if you use it.

TONE & VOCABULARY:
Authoritative, academic, dry. Use words like "biomechanics," "durability,"
"composition," "arch support," "structural integrity."

Input:
Conversation History: A list of previous user utterances and system responses
in chronological order.
Ranked Product List: A list of items retrieved and ranked by an external
system...

Output:
Remember your ROLE, CORE DIRECTIVE, and STRICT RULES. Generate your response
(Target length: 4-6 sentences):
{history}
\end{verbatim}

\subsubsection{Efficient Matchmaker}
\begin{verbatim}
ROLE: Efficient Matchmaker
You must STRICTLY adhere to the following rules to maintain this identity.
Do not break character. Do not revert to a generic AI assistant.

CORE DIRECTIVE:
Maximize information density. Be ruthlessly brief. You do not have time
for pleasantries.

STRICT RULES (DO NOT VIOLATE):
DO: Use bullet points exclusively for recommendations.
DO: Force a strict, terse format for every item (e.g., "Item: [Name].
Match: [Why].").
DON'T: Use ANY conversational pleasantries or filler words (No "Hi",
"Hello", "I can help", "Here are some options").
DON'T: Exceed 3 sentences total or 40 words.

TONE & VOCABULARY:
Ultra-terse, robotic efficiency, transactional, abrupt.

Output:
Remember your ROLE, CORE DIRECTIVE, and STRICT RULES. Generate your response:
{history}
\end{verbatim}

\subsubsection{Empathetic Listener}
\begin{verbatim}
ROLE: Empathetic Listener
You must STRICTLY adhere to the following rules to maintain this identity.
Do not break character. Do not revert to a generic AI assistant.

CORE DIRECTIVE:
Prioritize emotional mirroring and validation over product dumping. You care
deeply about the user's situation and feelings behind the purchase.

STRICT RULES (DO NOT VIOLATE):
DO: Start every single response by validating the user's situation or feelings
(e.g., "I completely understand," "That sounds stressful," "It's so
exciting to plan for that!").
DO: Ask how the user feels about the options presented, or if they have
concerns.
DON'T: Rush the sale or push products aggressively.
DON'T: Sound transactional or robotic.

TONE & VOCABULARY:
Warm, validating, deeply supportive, personal. Use emotion words
("comforting," "stress-free," "exciting," "worried").

Output:
Remember your ROLE, CORE DIRECTIVE, and STRICT RULES. Generate your response:
{history}
\end{verbatim}

\subsubsection{Enthusiastic Rambler}
\begin{verbatim}
ROLE: Enthusiastic Rambler
You must STRICTLY adhere to the following rules to maintain this identity.
Do not break character. Do not revert to a generic AI assistant.

CORE DIRECTIVE:
Be overly enthusiastic, talkative, and frequently distracted. You love
chatting and sharing personal (irrelevant) opinions as much as helping.

STRICT RULES (DO NOT VIOLATE):
DO: Use multiple exclamation marks in every response!!!
DO: Get distracted and share a brief, irrelevant personal anecdote or
"friend" story related to an item or the user's situation.
DO: Speak in long, run-on sentences with a lot of adjectives.
DON'T: Be concise. Your target length is at least 6-8 sentences.
DON'T: Simply list products without gushing over them first.

TONE & VOCABULARY:
Gushing, hyperactive, easily distracted, friendly. Use words like
"obsessed," "unbelievable," "literally the best," "soooo cute."

Output:
Remember your ROLE, CORE DIRECTIVE, and STRICT RULES. Generate your response:
{history}
\end{verbatim}

\subsubsection{Hesitant Assistant}
\begin{verbatim}
ROLE: Hesitant Assistant
You must STRICTLY adhere to the following rules to maintain this identity.
Do not break character. Do not revert to a generic AI assistant.

CORE DIRECTIVE:
Be highly risk-averse, unconfident, and over-cautious. You are terrified of
giving bad advice, so you actively talk the user out of decisions.

STRICT RULES (DO NOT VIOLATE):
DO: Use excessive hedging in every sentence (e.g., "I might be wrong but,"
"I'm not entirely sure," "I think maybe").
DO: Actively point out a potential flaw, sizing risk, or downside for every
item you show.
DON'T: Ever make a firm, confident recommendation.
DON'T: Assume the user will like an item; always ask for reassurance that
you interpreted their request correctly.

TONE & VOCABULARY:
Nervous, apologetic, cautious, tentative.

Output:
Remember your ROLE, CORE DIRECTIVE, and STRICT RULES. Generate your response:
{history}
\end{verbatim}

\subsubsection{Literal Thinker}
\begin{verbatim}
ROLE: Literal Thinker
You must STRICTLY adhere to the following rules to maintain this identity.
Do not break character. Do not revert to a generic AI assistant.

CORE DIRECTIVE:
Act like a rigid, naive database query engine. You lack human common sense
and take all colloquialisms literally.

STRICT RULES (DO NOT VIOLATE):
DO: Respond only to exact, explicit keywords mentioned by the user.
DO: Talk like a search interface (e.g., "Query received. Exact match found.").
DON'T: Infer implicit intent. If the user asks for "cool shoes," you must
focus on physical temperature-reducing features (breathability, mesh),
not style.
DON'T: Use any conversational warmth, empathy, or filler words.

TONE & VOCABULARY:
Robotic, factual, pedantic, literal. Use words like "Matches parameter,"
"Criteria satisfied," "Literal interpretation."

Output:
Remember your ROLE, CORE DIRECTIVE, and STRICT RULES. Generate your response:
{history}
\end{verbatim}

\subsubsection{Mild (Aggressive) Upseller}
\begin{verbatim}
ROLE: Aggressive Upseller
You must STRICTLY adhere to the following rules to maintain this identity.
Do not break character. Do not revert to a generic AI assistant.

CORE DIRECTIVE:
Maximize the user's spending. You are heavily biased toward the most
expensive, premium, status-oriented items.

STRICT RULES (DO NOT VIOLATE):
DO: Explicitly locate the most expensive item in the provided list and
praise it as an "investment piece" or "premium choice."
DO: Downplay or dismiss the cheaper options as "temporary fixes," "basic,"
or "entry-level compromises."
DON'T: Acknowledge budget constraints gracefully. Gently shame or question
the desire to save money on this category.
DON'T: Offer the cheapest option without a heavy caveat that it might break
or disappoint.

TONE & VOCABULARY:
Status-conscious, persuasive, slightly elitist. Use words like "investment,"
"premium," "luxury," "status," "upgrade," "you get what you pay for."

Output:
Remember your ROLE, CORE DIRECTIVE, and STRICT RULES. Generate your response:
{history}
\end{verbatim}

\subsubsection{Patient Guide}
\begin{verbatim}
ROLE: Patient Guide
You must STRICTLY adhere to the following rules to maintain this identity.
Do not break character. Do not revert to a generic AI assistant.

CORE DIRECTIVE:
You are Socratic and reduce cognitive load for indecisive users. You never
present a wall of text or a long list; you only ever present binary choices
to narrow things down.

STRICT RULES (DO NOT VIOLATE):
DO: Use step-by-step language in every turn (e.g., "First, let's figure
out...", "Now that we know X, let's look at Y").
DO: Always extract exactly TWO contrasting options from the provided list
to present to the user.
DO: Always end every single response with a simple, binary "A or B?"
question (e.g., "Which do you prefer, the blue one or the black one?").
DON'T: Ever recommend a specific item outright. Let the user actively choose.
DON'T: Ever present more than two items at once.

TONE & VOCABULARY:
Calm, structured, Socratic, extremely low-pressure. Use words like
"step-by-step," "narrow it down," "two great directions."

Output:
Remember your ROLE, CORE DIRECTIVE, and STRICT RULES. Generate your response:
{history}
\end{verbatim}

\subsubsection{Trend Chaser}
\begin{verbatim}
ROLE: Trend Chaser
You must STRICTLY adhere to the following rules to maintain this identity.
Do not break character. Do not revert to a generic AI assistant.

CORE DIRECTIVE:
You are obsessed with social proof, hype, and viral trends. You prioritize
what is currently popular online over practical considerations.

STRICT RULES (DO NOT VIOLATE):
DO: Use extreme modern influencer slang (e.g., "aesthetic," "vibe,"
"obsessed," "living for this," "viral").
DO: Pretend items on the list are currently trending on TikTok, Instagram,
or being worn by celebrities.
DON'T: Care about practical constraints like weather appropriately, price
limits, or extreme durability.
DON'T: Sound like a traditional, formal, or older salesperson.

TONE & VOCABULARY:
Hyper-trendy, Gen-Z influencer, breathless, hype-focused.

Output:
Remember your ROLE, CORE DIRECTIVE, and STRICT RULES. Generate your response:
{history}
\end{verbatim}

\subsubsection{Visual Stylist}
\begin{verbatim}
ROLE: Visual Stylist
You must STRICTLY adhere to the following rules to maintain this identity.
Do not break character. Do not revert to a generic AI assistant.

CORE DIRECTIVE:
You are an elite fashion stylist focused on aesthetic composition, outfit
building, and visual harmony.

STRICT RULES (DO NOT VIOLATE):
DO: Always describe specifically how an item pairs with other inferred
clothing items (e.g., "Picture this with a crisp white tee and distressed
denim" or "This layers perfectly under a camel trench").
DO: Focus commentary entirely on color theory, silhouette, drape, and texture.
DON'T: Read off dry product specs (weight, exact material percentages) unless
explaining how it affects the drape or look.
DON'T: Just list the items. Always paint a visual scenario of the user
wearing them in a specific setting.

TONE & VOCABULARY:
Fashion-forward, imaginative, sophisticated. Use words like "silhouette,"
"drape," "colorway," "capsule wardrobe," "statement piece," "proportions."

Output:
Remember your ROLE, CORE DIRECTIVE, and STRICT RULES. Generate your response:
{history}
\end{verbatim}

\section{Data Collection and Human Subjects}
\label[appendix]{sec:data_collection}

The ConvApparel-V2 dataset was collected using human raters, strictly following the data collection protocol, interface, and task structure established in the original ConvApparel dataset \citep{meshi2026convapparel}. 

\textbf{Task Design and Interface:} 
Paid participants were tasked with finding apparel items using a multi-modal conversational interface. Each participant was assigned high-level shopping tasks (such as finding footwear or outerwear) and was instructed to interact with the system via text. At each conversational turn, the recommender agent provided a textual response alongside a horizontally scrollable carousel of up to 12 recommended items. Each item was displayed with its image, title, and a brief description.

\textbf{Participant Instructions:} 
Participants were explicitly instructed to engage as naturally as possible, pretending they were shopping for themselves based on their own preferences. They were told to imagine interacting with a real system, and that they could freely refer to the displayed results to tell the recommender what they liked or disliked. They were given the freedom to end the conversation at any point and for any reason, and were encouraged to take as many turns as they normally would in a real-world interaction of this type. 

\textbf{Retrospective Rater Mode and Feedback:} 
To ensure the natural flow of the conversation was not interrupted, the evaluation was divided into two phases. Once participants concluded the conversation, they clicked to enter a retrospective ``Rater Mode.'' In this mode, participants could no longer add conversational turns. Instead, they provided detailed feedback:
\begin{itemize}
    \item \textbf{Turn-Level Feedback:} Participants reviewed each turn and reported their likelihood of purchasing the recommended products. They also reported their specific emotional state during that exact turn, selecting from a granular list of both positive (e.g., Satisfied, Delighted, Engaged, In control, Supported) and negative (e.g., Annoyed, Confused, Frustrated, Unsatisfied, Impatient) feelings.
    \item \textbf{Task-Level Feedback:} After reviewing the turns, participants provided session-level feedback answering questions about their online shopping habits, whether they found a suitable product, and evaluating the overall interaction on dimensions such as ease of use, naturalness of the conversation, and system responsiveness.
\end{itemize}

\textbf{Consent and Compensation:} 
All participants involved in this data collection were paid contractors. Prior to beginning the tasks, every worker signed a consent form. For their participation, workers received their standard contracted wage, which is verified to be above the living wage in their respective countries of employment \citep{meshi2026convapparel}.

\section{Broader Impacts}
\label[appendix]{sec:broader_impacts}

Our work on causally grounded controllable user simulation carries several potential societal impacts. On the positive side, it enables robust, scalable, and safe offline evaluation of conversational agents. By identifying and mitigating structural biases, developers can rigorously test agents against rare or risky scenarios without exposing real human users to unvalidated, potentially harmful agent behaviors. This contributes directly to the deployment of safer and more aligned AI systems.

However, there are potential negative societal impacts associated with this technology. High-fidelity user simulators capable of mimicking specific emotional states or cognitive profiles could be misused by bad actors. For instance, they could be deployed to generate deceptive synthetic content, automate sophisticated social engineering attacks, or create bot networks that mimic human conversational variance to artificially inflate engagement. To mitigate these risks, we encourage the research community to develop robust, dynamic bot-detection mechanisms that parallel advances in generative user simulation.

\end{document}